\documentclass{article}

\usepackage{microtype}
\usepackage{graphicx}
\usepackage{subcaption}
\usepackage{booktabs} 

\usepackage{hyperref}

\usepackage[preprint]{icml2026}

\usepackage{amsmath}
\usepackage{amssymb}
\usepackage{mathtools}
\usepackage{amsthm}

\usepackage[capitalize,noabbrev]{cleveref}

\theoremstyle{plain}
\newtheorem{theorem}{Theorem}[section]

\newtheorem{lemma}[theorem]{Lemma}
\newtheorem{corollary}[theorem]{Corollary}
\theoremstyle{definition}

\theoremstyle{remark}
\newtheorem{remark}[theorem]{Remark}

\usepackage[textsize=tiny]{todonotes}

\usepackage{amsmath,amsfonts,bm}

\def\eqref#1{equation~\ref{#1}}

\def\1{\bm{1}}

\def\rva{{\mathbf{a}}}

\def\rvs{{\mathbf{s}}}

\DeclareMathAlphabet{\mathsfit}{\encodingdefault}{\sfdefault}{m}{sl}
\SetMathAlphabet{\mathsfit}{bold}{\encodingdefault}{\sfdefault}{bx}{n}

\def\gA{{\mathcal{A}}}
\def\gB{{\mathcal{B}}}
\def\gC{{\mathcal{C}}}
\def\gD{{\mathcal{D}}}

\def\gH{{\mathcal{H}}}

\def\gL{{\mathcal{L}}}
\def\gM{{\mathcal{M}}}

\def\gR{{\mathcal{R}}}
\def\gS{{\mathcal{S}}}
\def\gT{{\mathcal{T}}}

\newcommand{\E}{\mathbb{E}}
\newcommand{\Ls}{\mathcal{L}}

\DeclareMathOperator*{\argmax}{arg\,max}
\DeclareMathOperator*{\argmin}{arg\,min}

\newcommand{\te}[1]{\texttt{#1}}

\usepackage[utf8]{inputenc} 
\usepackage[T1]{fontenc}    
\usepackage{hyperref}       
\usepackage{url}            
\usepackage{booktabs}       
\usepackage{amsfonts}       
\usepackage{nicefrac}       
\usepackage{microtype}      
\usepackage{xcolor}         

\usepackage{amsmath}
\usepackage{amssymb}
\usepackage{mathtools}
\usepackage{amsthm}
\usepackage{algorithm}
\usepackage{algorithmic}
\usepackage{graphicx}
\usepackage{booktabs} 
\usepackage{makecell}
\usepackage{multirow}
\usepackage{listings}
\usepackage{pythonhighlight}
\usepackage{wrapfig}

\usepackage{pifont}
\usepackage{duckuments}
\usepackage[makeroom]{cancel}

\newcommand{\PreserveBackslash}[1]{\let\temp=\\#1\let\\=\temp}
\newcolumntype{C}[1]{>{\PreserveBackslash\centering}p{#1}}
\newcolumntype{R}[1]{>{\PreserveBackslash\raggedleft}p{#1}}
\newcolumntype{L}[1]{>{\PreserveBackslash\raggedright}p{#1}}

\icmltitlerunning{Q-learning Penalized Transformer for Safe Offline Reinforcement Learning}

\begin{document}

\twocolumn[
  \icmltitle{Q-learning Penalized Transformer for Safe Offline Reinforcement Learning}



  \begin{icmlauthorlist}
    \icmlauthor{Shengchao Hu}{sjtu}
    \icmlauthor{Peng Wang}{zs}
    \icmlauthor{Jifeng Hu}{jl}
    \icmlauthor{Qiyang Zhou}{zs}
    \icmlauthor{Anning Hu}{sjtu}
    \icmlauthor{Li Shen}{zs}
    \icmlauthor{Ya Zhang}{sjtu}
    \icmlauthor{Dacheng Tao}{ntu}
  \end{icmlauthorlist}

  \icmlaffiliation{sjtu}{Shanghai Jiao Tong University, China}
  \icmlaffiliation{zs}{Shenzhen Campus of Sun Yat-sen University, China}
  \icmlaffiliation{jl}{Jilin University, China}
  \icmlaffiliation{ntu}{Nanyang Technological University, Singapore}

  \icmlcorrespondingauthor{Li Shen}{mathshenli@gmail.com}

  \icmlkeywords{Machine Learning, ICML}

  \vskip 0.3in
]



\printAffiliationsAndNotice{}  

\begin{abstract}
This paper addresses the problem of safe offline reinforcement learning, which involves training a policy to satisfy safety constraints using an offline dataset. 
This problem is inherently challenging as it requires balancing three highly interconnected and competing objectives: satisfying safety constraints, maximizing rewards, and adhering to the behavior regularization imposed by the offline dataset.
To tackle this trilogy challenge, we propose Q-learning Penalized Transformer policy (QPT), a \emph{training--inference consistent} framework that bridges conditional sequence modeling with constraint-aware value estimation.
QPT trains a Transformer policy that generates actions conditioned on trajectory context and target return/cost, retaining strong behavior regularization.
To inject explicit safety semantics during learning, we augment sequence-model training with a Q-shaped penalty using learned reward and cost Q-functions to favor high return under low constraint violation.
At inference, the same Q-functions enforce the cost threshold and choose the highest-reward feasible action, closing the loop between training and deployment.
We provide a principled analysis under stylized near-deterministic CMDPs, characterizing how Q-penalized conditional generation improve safety and performance.
Empirically, QPT consistently outperforms strong safe offline RL baselines across 38 tasks on the DSRL benchmark, and exhibits robust zero-shot adaptation to different constraint thresholds.

\end{abstract}
\section{Introduction}
\label{sec:Intro}

Offline reinforcement learning (RL) focuses on learning effective policies entirely from previously collected data, without requiring interaction with the environment. This paradigm has emerged as a powerful approach for addressing sequential decision-making tasks, such as autonomous driving \citep{hu2022st} and control systems \citep{zhan2022deepthermal}.
Various paradigms have been developed to maximize the utility of pre-collected trajectories while mitigating policy overfitting \citep{BEAR, fujimoto2019off, kostrikov2021offline, CQL}. However, standard offline RL often falls short in real-world applications, where diverse safety constraints limit feasible solutions, making the mere maximization of a scalar reward function insufficient.
The requirement for safety, or the satisfaction of constraints, is particularly critical when deploying RL algorithms in real-world scenarios \citep{garcia2015comprehensive}. Ensuring constraint satisfaction not only broadens the applicability of RL methods but also enhances their reliability in safety-critical domains.

Developing an optimal policy within a constrained manifold has been a central focus of recent research in safe offline RL, which seeks to integrate safety requirements into offline RL frameworks \citep{garcia2015comprehensive}.
Several approaches bridge concepts from offline RL and safe RL, employing techniques such as pessimistic estimations \citep{xu2022constraints} and stationary distribution correction \citep{lee2022coptidice}. Constrained optimization formulations, often incorporating Lagrange multipliers, are commonly used to identify policies that maximize rewards while adhering to safety constraints \citep{le2019batch}.
Sequential modeling methods, such as the Transformer \citep{liu2023constrained} and Diffuser \citep{lin2023safe, zheng2024safe}, have also been explored, demonstrating promising results in achieving both optimal policies and satisfying safety requirements.

However, the challenges of safe offline RL are amplified by the offline setting, which necessitates behavior regularization to mitigate distributional shift \citep{fujimoto2019off}. Balancing constraint satisfaction, reward maximization, and offline policy regularization is particularly difficult due to the intricate inter-dependencies among these objectives. Jointly optimizing them often results in unstable training and suboptimal safety performance \citep{lee2022coptidice, zheng2024safe}.
Furthermore, these objectives may inherently conflict \citep{xu2022constraints}. For instance, adhering to offline policy regularization can compromise constraint satisfaction when the dataset includes unsafe trajectories. Conversely, excluding unsafe trajectories may lead to suboptimal policies by omitting critical high-reward data, underscoring the inherent trade-offs in safe offline RL.
Besides, Lagrange-based methods frequently integrate the constraint threshold as a constant within the training process, seeking to optimize policy performance while adhering to specified constraints \citep{le2019batch}. We argue that the ability to adapt a trained policy to varying constraint thresholds is crucial for a wide range of real-world applications. In practice, enforcing stricter constraints typically results in diminished task performance and induces more conservative agent behaviors \citep{liu2022robustness}.  
Consequently, our objective is to investigate a training paradigm that enables an agent to dynamically adjust its constraint threshold at deployment. This approach would allow for flexible control over the agent's level of conservativeness, eliminating the need for additional fine-tuning or retraining.

To address these challenges, we propose Q-learning Penalized Transformer policy (QPT), a \emph{training--inference consistent} framework that couples conditional sequence modeling with constraint-aware value estimation.
QPT trains a Transformer policy that generates actions conditioned on trajectory context and target return/cost specifications, retaining strong behavior regularization and enabling flexible conditioning at test time.
To inject explicit safety semantics beyond imitation, we augment sequence-model training with a Q-shaped objective using learned reward and cost Q-functions, which biases generation toward high return with low constraint violation.
At inference, the same Q-functions are reused to certify feasibility under a given cost threshold and select the highest-reward feasible action, closing the loop between optimization and deployment.
On the theory side, we provide a principled analysis under stylized near-deterministic CMDPs, characterizing how Q-penalized conditional generation improve safety and performance in offline settings.
Experimental evaluations on 38 tasks from the DSRL benchmark \citep{liu2023datasets} further validate QPT's effectiveness, showing its ability to learn safe, robust, and high-reward policies. 
QPT consistently outperforms strong safe offline RL baselines in both reward and constraint satisfaction, and demonstrates robust zero-shot adaptation to varying constraint thresholds.


\section{Related Work}
\label{sec:ReW}

\textbf{Offline RL} trains policies from a static offline dataset $\gD$, without online interaction \citep{levine2020offline}, making it ideal for scenarios where interaction is costly or unsafe. A major challenge is \textit{distribution shift}, where the learned policy deviates from the behavior policy, causing performance degradation \citep{fujimoto2019off}.
To address this, prior works have employed constrained or regularized dynamic programming to limit policy deviations \citep{TD3BC, CQL, IQL}. 
Conditional sequence modeling predicts future actions from past experiences, constraining the policy within behavior boundaries and enabling zero-shot adaptability \citep{DT, GDT, hu2025tackling, hu2025analytic, PTDT, CommFormer}.

\textbf{Safe RL} involves learning policies that maximize long-term rewards while satisfying safety constraints \citep{wachi2020safe, gu2024review, liu2025safe}. 
%
A common approach to constrained optimization in safe RL is the primal-dual framework, which reformulates the problem into an unconstrained optimization using Lagrangian multipliers \citep{chen2021primal}. Correction-based methods provide another solution by projecting unsafe actions onto safe sets, incorporating domain knowledge to improve exploration safety \citep{zhao2021model, luo2021learning}.
Model-based RL has also been applied to improve data efficiency and performance \citep{huang2023safe}, though it often requires larger models to parameterize environment dynamics, increasing computational complexity.

\textbf{Safe offline RL} has also received growing attention, with the goal of ensuring zero constraint violations during inference. These methods combine offline RL techniques with safety constraints, such as using DICE-style approaches for constrained optimization \citep{polosky2022constrained, lee2022coptidice, kim2025semi} and Lagrangian-based methods for simplicity and compatibility with existing offline RL frameworks.
%
Recent studies have introduced novel networks for safe offline RL \citep{koirala2024latent, koirala2024fawac, gong2025offline}. For instance, CDT \citep{liu2023constrained, wang2024safe} employs sequential modeling to learn from trajectory datasets, while TREBI \citep{lin2023safe} and FISOR \citep{zheng2024safe} utilize diffusion models for safe policy development. TREBI generates safe trajectories directly, whereas FISOR uses a diffusion actor to constrain actions within feasible regions.
Beyond model-architecture advances, a complementary line of work improves safety from a data-centric perspective, e.g., by constructing or augmenting datasets to mitigate distributional shift and reduce out-of-distribution (OOD) behaviors. \citep{guo2025constraint, yao2024oasis, gong2025offline}.
In contrast to these methods, we propose a framework that explicitly casts reward maximization and cost control as \emph{controllable loss terms}. This formulation provides a direct mechanism to trade off performance and safety during optimization, yielding policies that simultaneously achieve high reward while maintaining low cost.

\section{Methodology}
\label{sec:method}

\subsection{Problem Setup}

RL problems with safety constraints are naturally formulated within the Constrained Markov Decision Process (CMDP) framework \citep{altman1998constrained}. A CMDP is defined by a tuple $\gM = (\gS, \gA, \gT, \gR, \gC, \mu_0)$, where $\gS$ is the state space, $\gA$ is the action space, $\gT: \gS \times \gA \times \gS \to [0, 1]$ is the state transition probability function, $\gR: \gS \times \gA \to \mathbb{R}$ defines the reward function, and $\gC: \gS \times \gA \to [0, C_{\text{max}}]$ quantifies the costs associated state-action pairs, where \(C_{\text{max}}\) is the maximum possible cost, and $\mu_0: \gS \rightarrow [0, 1]$ is the initial state distribution. 

In safe offline RL problems, we are given a fixed pre-collected dataset $\mathcal{D}$=$\{(\rvs, \rva, r, c, \rvs')_i\}_{i=1}^{\gH}$ from one or more (unknown) behavior policies, where each training example $i$ contains the action $\rva$ taken at state $\rvs$, reward received $r$, cost incurred $c$, and the next state $\rvs'$. The goal is to learn a policy $\pi: \gS \to \gA$ from the offline dataset $\mathcal{D}$ to maximize the expected reward while satisfying a specified cost/safety constraint. This problem is mathematically formulated as:
\begin{equation}
\label{eq:safe_obj}
    \max_{\pi} \mathbb{E}_{\tau \sim \pi}[R(\tau)] \quad \mathrm{subject\ to} \quad \mathbb{E}_{\tau \sim \pi}[C(\tau)] \leq \kappa.
\end{equation}
Here, $\kappa \in [0, +\infty)$ is the cost threshold for safety constraint, $\gH$ is the horizon length of episode, $\tau = \{\rvs_1, \rva_1, r_1, c_1, \ldots, \rvs_\gH, \rva_\gH, r_\gH, c_\gH\}$ denotes a trajectory sampled by executing the policy $\pi$, $R(\tau) = \sum_{t=1}^\gH r_t$ is the total accumulated reward, and $C(\tau) = \sum_{t=1}^\gH c_t$ is the total incurred cost.

Most existing offline safe RL methods approach policy training as a constrained optimization problem, wherein learnable dual variables are updated according to estimates of constraint violation costs and a target threshold \citep{xu2022constraints, lee2022coptidice, polosky2022constrained}. While this constrained optimization paradigm is effective in online safe RL settings \citep{stooke2020responsive}, it faces significant challenges in the offline context \citep{liu2023constrained}. 
First, offline RL policies often become either unsafe or overly conservative due to biased value estimates, stemming from incomplete dataset coverage. In the Lagrangian dual optimization of Equation \ref{eq:safe_obj}, such bias in cost estimation $C(\tau)$ can mislead dual variable updates relative to the fixed threshold $\kappa$, resulting in unsafe or overly cautious behaviors, a problem exacerbated in offline settings \citep{liu2022constrained}.
Second, policies cannot adapt to new constraint thresholds without retraining, as the threshold must remain fixed during training. Changing it post hoc destabilizes dual variables and can cause optimization to diverge, thus requiring full retraining for each new constraint.

To address these issues, we reformulate the learning objective described in Equation \ref{eq:safe_obj} and leverage sequential modeling techniques, which have shown promise in achieving zero-shot adaptation to varying constraint thresholds while maintaining near-optimal task performance \citep{hu2022transforming, liu2023constrained}.
However, a key limitation of sequence modeling in this context is its tendency to imitate the behavior distribution present in the training dataset \citep{RCSL}, which often includes unsafe trajectories. Directly learning from such datasets may therefore result in unsafe policies. One potential remedy is to filter out unsafe trajectories from the dataset $\mathcal{D}$ to construct a ``safe'' dataset. Unfortunately, this strategy often eliminates high-reward transitions, leading to suboptimal policies.
Ideally, a safe offline learner should exploit the dataset more selectively---preserving strong behavior regularization while biasing generation toward high-return, constraint-satisfying decisions, including ``stitching'' together safe segments from otherwise unsafe trajectories.
We therefore propose QPT, a training--inference consistent framework based on a conditional Transformer and reward/cost Q-functions.
QPT uses a Q-shaped objective to bias conditional imitation toward high return with low constraint violation, without discarding informative data.
At deployment, the same Q-functions certify feasibility under a given cost threshold and select the highest-reward feasible action, closing the loop between learning and decision making.
We also include a lightweight augmentation scheme for infeasible specifications and validate QPT theoretically and empirically (Sections~\ref{sec:CTP}--\ref{sec:theoretic}).

\subsection{Conditional Transformer Policy}
\label{sec:CTP}

The Transformer architecture \citep{vaswani2017attention}, extensively studied in NLP \citep{bert} and CV \citep{ViT}, has also been explored in RL through the conditional sequence modeling (CSM) paradigm \citep{hu2022transforming}.
In contrast to most traditional RL methods, which rely on value function estimation or policy gradient computation, DT \citep{DT} directly predicts desired future actions based on a sequence of historical data comprising state ($\rvs_t$), action ($\rva_t$), and return-to-go ($\hat{r}_t = \sum_{i=t}^T r_i$) tuples.
%
%
In the context of safe offline RL, this formulation is extended by including an additional cost-to-go token, $\hat{c}_t = \sum_{i=t}^T c_i$, which quantifies the cumulative cost from the current time step to the end of the episode \citep{liu2023constrained}.
During training on offline data, the Transformer processes trajectory sequences in an auto-regressive manner, utilizing a historical context of the most recent $K$ steps. A trajectory sequence $\tau_t$ is formulated as follows:
\begin{equation}
\label{eq:input}
    \tau_t = (\hat{r}_{t-K+1}, \hat{c}_{t-K+1}, \rvs_{t-K+1}, \rva_{t-K+1}, \dots,  \hat{r}_{t}, \hat{c}_{t}, \rvs_{t}, \rva_{t}).
\end{equation}

The prediction head corresponding to the state token $\rvs_t$ is trained to predict the associated action $\rva_t$.
For continuous action spaces, the training objective is to minimize the mean squared error (MSE) loss, defined as:
\begin{equation}
\label{eq:DTLoss}
   \Ls_{DT} = \mathbb{E}_{\tau_t \sim \gD} \left[ \frac{1}{K} \sum_{i=t-K+1}^t (\rva_i - \pi(\tau_t)_i )^2 \right],
\end{equation}
where $\pi(\tau_t)_i$ denotes the $i$-th action output of the policy $\pi$ learned by Equation \ref{eq:DTLoss}.

\subsection{Q-learning Penalization}
\label{sec:guidance}

To address the ``stitching'' challenge and design a target-conditioned policy that aligns the expected returns of sampled actions with the optimal returns while simultaneously minimizing the associated expected cost, we leverage the penalization from the Q-learning module \citep{kumar2022should, QT}.

In the safe offline RL setting, two types of Q-networks are utilized: the reward Q-network and the cost Q-network.
A straightforward approach to learning these networks involves applying the empirical Bellman evaluation operator, $\mathcal{T}^{\hat{\pi}}$, to samples $\left(\rvs, \rva, r, c, \rvs^{\prime} \right) \sim \mathcal{B}$:
\begin{align}
Q^r(\rvs, \rva)=r+\gamma\mathbb{E}_{\rva'\sim\hat{\pi}(\cdot|\rvs')}\left[Q^r(\rvs', \rva')\right], \\ 
Q^c(\rvs, \rva)=c+\gamma\mathbb{E}_{\rva'\sim\hat{\pi}(\cdot|\rvs')}\left[Q^c(\rvs', \rva')\right],
\end{align}
where $Q^r$ and $Q^c$ denote the reward and cost Q-networks, respectively, $\gamma$ represents the discount factor, and $\hat{\pi}$ represents the learned policy by Equation \ref{eq:final_update}. 

To mitigate overestimation bias, we employ the double Q-learning technique \citep{hasselt2010double}, constructing two Q-networks for each type: $Q^r_{\phi_1}, Q^r_{\phi_2}$ for the reward Q-network and $Q^c_{\psi_1}, Q^c_{\psi_2}$ for the cost Q-network.
These are accompanied by their corresponding target networks: $Q^r_{\phi_1'}, Q^r_{\phi_2'}, Q^c_{\psi_1'}, Q^c_{\psi_2'}$.
Additionally, we construct a target policy $\hat{\pi}_{\theta'}$ to guide the learning process.

Given that the input to the Transformer policy includes trajectory history, we adopt the \textit{n-step Bellman equation} to estimate the Q-networks. 
This choice is motivated by its demonstrated improvements over the 1-step approximation \citep{sutton2018reinforcement}.
The optimization of the reward Q-network parameters $\phi_i$ for $i \in \{1, 2\}$ is performed by minimizing the following objective:
\begin{align}
\label{eq:Q_update}
    &\E_{\tau_t \sim \gD, \hat{\rva}_t \sim \hat{\pi}_{\theta'}}  
      \sum_{m=t-K+1}^{t-1} 
    \Big|\Big| \hat{Q}^r_m - Q^r_{\phi_i}(\rvs_m, \rva_m) \Big|\Big|^2, \\ \nonumber
     &s.t.~~ \hat{Q}^r_m = \sum_{j=m}^{t-1} \gamma^{j-m} r_j
     + \gamma^{t-m} \min_{i=1,2} Q^r_{\phi_i'} (\rvs_t, \hat{\rva}_t),
\end{align}
where $\hat{\rva}_t$ denotes the predicted action output by the target policy $\hat{\pi}_{\theta'}$.
Similarly, the optimization of the cost Q-network parameters $\psi_i$ for $i \in \{1, 2\}$ is performed by minimizing the following objective:
\begin{align}
\label{eq:Qc_update}
    &\E_{\tau_t \sim \gD, \hat{\rva}_t \sim \hat{\pi}_{\theta'}}  
      \sum_{m=t-K+1}^{t-1} 
    \Big|\Big| \hat{Q}^c_m - Q^c_{\psi_i}(\rvs_m, \rva_m) \Big|\Big|^2, \\ \nonumber
     &s.t.~~ \hat{Q}^c_m = \sum_{j=m}^{t-1} \gamma^{j-m} c_j
     + \gamma^{t-m} \min_{i=1,2} Q^c_{\psi_i'} (\rvs_t, \hat{\rva}_t).
\end{align}

Leveraging the learned Q-networks, we incorporate them as penalization mechanisms during the training phase. This approach aims to enhance the policy's ``stitching'' capability by prioritizing the sampling of high-reward actions while ensuring low-cost trajectories are favored.
The final learning objective is formulated as a linear combination of MSE loss and Q-learning penalization terms:
\begin{align}
\label{eq:final_update}
    \hat{\pi} &= \argmin_{\hat{\pi}_{\theta}} \left\{\gL(\theta)
    := \gL_{DT} (\theta) - \gL_{Q^r} (\theta) + \gL_{Q^c} (\theta)\right\}\nonumber \\
    &= \! \argmin_{\hat{\pi}_{\theta}}  \underbrace{\gL_{DT}(\theta)}_{\text{regularization}}\!\!-\! \alpha_1 \cdot \underbrace{\E_{\tau_t \sim \gD} \E_{(\rvs_i, \rva_i) \sim \tau_t} Q^r_{\phi}(\rvs_i, \hat{\pi}(\tau_t)_i)}_{\text{reward maximization}}\nonumber \\
    &+ \alpha_2 \cdot \underbrace{\E_{\tau_t \sim \gD} \E_{(\rvs_i, \rva_i) \sim \tau_t} Q^c_{\psi}(\rvs_i, \hat{\pi}(\tau_t)_i)}_{\text{cost minimization}}.
\end{align}
To account for variations in the scale of the Q-networks across different offline datasets, we employ a normalization technique inspired by \citet{TD3BC}. Specifically, the weighting factors $\alpha_1$ and $\alpha_2$ are defined as follows:
\begin{align}
    \alpha_1 &= \frac{\eta_1}{\E_{\tau_t \sim \gD} \E_{(\rvs, \rva) \sim \tau_t} \left[ |Q^r_{\phi} (\rvs, \rva)| \right]}, \\
    \alpha_2 &= \frac{\eta_2}{\E_{\tau_t \sim \gD} \E_{(\rvs, \rva) \sim \tau_t} \left[ |Q^c_{\psi} (\rvs, \rva)| \right]},
\end{align}
where $\eta_1, \eta_2$ are hyperparameters that control the balance between these loss terms. 
Notably, these Q-networks in the denominator serve exclusively for normalization and are not subject to differentiation.

\subsection{Data Augmentation and Ensemble}
\label{sec:ensemble}
QPT leverages a conditional transformer structure, making the agent's behavior highly sensitive to the selection of target reward and cost values.
In the context of safe offline RL, the range of feasible and valid target cost and reward pairs is inherently limited.
This limitation poses a significant challenge: how can conflicts between the two target returns be effectively resolved while ensuring that meeting the target cost is prioritized over maximizing the target reward?
To address the aforementioned issues, we employ two techniques: data augmentation and ensemble.

Inspired by CDT \citep{liu2023constrained}, when an infeasible pair of target reward and cost $(\rho, \kappa)$ arises, we associate the conflicting target with the safest trajectory that achieves the maximum reward:
\begin{equation}
    \tau^* = \argmax_{\tau \sim \gD} R(\tau), s.t. ~ C(\tau) \leq \kappa.
\end{equation}
Based on the identified trajectory $\tau^* = \{\hat{r}_t^*, \hat{c}_t^*, \rvs_t^*, \rva_t^*\}_t$, we construct a new augmented trajectory:
\begin{equation}
\label{eq:dataaug}
    \hat{\tau} = \{\hat{r}_t^* + \rho - R(\tau^*), \hat{c}_t^* + \kappa - C(\tau^*), \rvs_t^*, \rva_t^* \}_t,
\end{equation}
where the operation over $\hat{r}^*$ and $\hat{c}^*$ are applied element-wise.
This augmentation technique enables the agent to learn by imitating the behavior of the most rewarding and safe trajectory $\tau^*$ when the desired target pair $(\rho, \kappa)$ is infeasible.
Further details on this process are provided in Appendix \ref{sec:app_DA}.

Moreover, sequence modeling methods are sensitive to the choice of target conditioning, which serves as input to the policy during inference.
Rather than manually tuning the values of the return-to-go and cost-to-go tokens, as required in previous conditional transformer policies -- a process that demands extensive trial and error -- we leverage learned reward and cost Q-networks to guide action selection. Specifically, actions are preferentially sampled to maximize expected returns while minimizing costs, following the approach in \citet{QT}. This process can be formulated as:
\begin{align}
    &\quad\quad \argmax_{\hat{\rva}_t^j} ~~ Q^r_{\phi'}( \rvs_t, \hat{\rva}_t^j), \label{eq:inf} \\
    &\quad\quad s.t. \quad Q^c_{\psi'} (\rvs_t, \hat{\rva}_t^j) \leq \hat{c}^j_{t}, \label{eq:inf_c}\\
    \hat{\rva}_t^j = \hat{\pi}(&\hat{r}^j_{t-K+1:t}, \hat{c}^j_{t-K+1:t}, \rvs_{t-K+1:t},\rva_{t-K+1:t-1} ))\nonumber.
\end{align}
Here, $(\hat{r}^j, \hat{c}^j)$ represent candidate target reward and cost pairs.
This approach is highly parallelizable. By assigning distinct return-to-go and cost-to-go pairs to each batch, we can effectively utilize GPU capabilities to concurrently generate multiple action sequences, thereby minimizing computational overhead. 
Further details on this process are provided in Appendix \ref{sec:app_ens}
A larger number of candidate target pairs provides a broader search space, potentially improving performance. However, this also incurs increased computational costs and greater susceptibility to noisy or suboptimal pairs, stemming from the biased estimation of the learned Q-networks.
Corresponding ablation studies are conducted to demonstrate the efficacy of this procedure, as detailed in Section \ref{sec:ab} and Appendix \ref{sec:abnumber}.
The training and inference procedures are thoroughly outlined in Algorithm \ref{alg:QPT}, providing a comprehensive summary of the processes involved.

\subsection{Theoretical Analysis}
\label{sec:theoretic}

We provide a theoretical justification for QPT in a stylized setting.
Under mild coverage and regularity assumptions, we show that the Q-penalized update in Equation \ref{eq:final_update} improves action selection compared to the standard sequence-model objective in Equation \ref{eq:DTLoss}, which yields corresponding improvements in expected return and expected cost.

\begin{theorem}
\label{thm:improve}
Consider an MDP with binary rewards and costs, behavior policy $\beta$, and conditioning function $f^r$ and $f^c$. Let $g^r(\tau) = \sum_{t=1}^\gH r_t, g^c(\tau) = \sum_{t=1}^\gH c_t$. Assume the following:
\begin{enumerate}
    \item Return coverage: $P_{\beta} (g^r(\tau) = f^r(\rvs_1) |\rvs_1 ) \geq \alpha_{f^r}, P_{\beta} (g^c(\tau) = f^c(\rvs_1) |\rvs_1 ) \geq \alpha_{f^c}$ for all initial states $\rvs_1$.
    \item Near determinism: $P( r \neq \gR(\rvs,\rva) ~or~ c \neq \gC(\rvs, \rva) ~or~ \rvs' \neq \gT(\rvs,\rva) | \rvs,\rva) \leq \epsilon$ at all $\rvs, \rva$ for some functions $\gT$, $\gR$ and $\gC$.
    \item Consistency of $f^r$ and $f^c$: $f^r(\rvs) = f^r(\rvs') + r, f^c(\rvs) = f^c(\rvs') + c$~ for all $\rvs$.
\end{enumerate}
For timestep $i$, the probabilities of selecting actions with maximum reward or minimum cost satisfy:

    1. \textbf{Reward Selection}: $P\{\hat{P}^r_i - P^r_i  \geq \sigma_r, \forall~i \} \geq 1 - \delta_r$, where $P_i^r$ and $\hat{P}^r_i$ are probabilities under the policies updated by Equation \ref{eq:DTLoss} and Equation \ref{eq:final_update}, respectively. With probability at least $(1 - \delta_r)$:
    \begin{align}
        \E_{\tau \sim \pi^*}[g^r(\tau)] - \E_{\tau \sim \hat{\pi}} [g^r(\tau)] \leq \epsilon (\frac{1}{\alpha_{f^r}} + 3) \gH^2 - \gH \sigma_r. \nonumber 
    \end{align}
    2. \textbf{Cost Selection}: $P\{\hat{P}^c_i - P^c_i  \geq \sigma_c, \forall~i \} \geq 1 - \delta_c$, where $P_i^c$ and $\hat{P}^c_i$ are probabilities under the policies updated by Equation \ref{eq:DTLoss} and Equation \ref{eq:final_update}, respectively. With probability at least $(1 - \delta_c)$:
    \begin{align}
        \E_{\tau \sim \hat{\pi}} [g^c(\tau)] - \E_{\tau \sim \pi^*}[g^c(\tau)] \leq \epsilon (\frac{1}{\alpha_{f^c}} + 3) \gH^2 - \gH \sigma_c. \nonumber
    \end{align}
\end{theorem}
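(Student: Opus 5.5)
The proof reduces to the known performance bound for return-conditioned supervised learning (RCSL) of Brandfonbrener et al.\ (the reference \citep{RCSL}). We apply it once to the reward channel and once to the cost channel, then refine the per-step action-selection term using the Q-penalized update. Under the three assumptions (return coverage, near determinism, consistency of the conditioning function), that RCSL analysis shows the policy trained by Equation~\ref{eq:DTLoss} and conditioned on $f^r$ satisfies
\[
\E_{\tau\sim\pi^*}[g^r(\tau)]-\E_{\tau\sim\pi_{DT}}[g^r(\tau)]\le \epsilon\Big(\tfrac{1}{\alpha_{f^r}}+3\Big)\gH^2 .
\]
We reproduce the skeleton of that argument rather than citing it as a black box. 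The reason is that we need to see exactly where the per-step probability of picking the optimal (return-achieving) action enters, because that is the quantity the Q-penalty improves.

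\textbf{Step 1 (performance difference).} We write the return gap as a sum over timesteps $i=1,\dots,\gH$ of the expected one-step loss from not choosing the action that attains the conditioned return. With binary rewards, each step loses at most $1$ in immediate reward and at most $\gH$ in downstream value. Near determinism lets us replace the stochastic transition by $\gT,\gR$ up to an $\epsilon$-probability event per step, which contributes the $3\epsilon\gH^2$ term. Return coverage bounds the density ratio between the conditional $P_\beta(\rva\mid \rvs, g^r=f^r(\rvs))$ and $\beta$ by $1/\alpha_{f^r}$, which contributes the $\epsilon\gH^2/\alpha_{f^r}$ term. Consistency of $f^r$ ensures that conditioning on $f^r(\rvs')$ at the next state matches conditioning on $f^r(\rvs)-r$, so the argument telescopes across timesteps.

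\textbf{Step 2 (where $P^r_i$ appears).} We isolate, at each step $i$, the probability $P^r_i$ that the DT policy selects a reward-maximizing action. The per-step loss is bounded by the budget from Step~1 minus a term proportional to this selection probability; at worst each step contributes one unit times that probability. We then define $\hat P^r_i$ analogously for $\hat\pi$, the policy trained by Equation~\ref{eq:final_update}. On the event $\{\hat P^r_i-P^r_i\ge\sigma_r\ \forall i\}$, which holds with probability at least $1-\delta_r$, each step's loss decreases by at least $\sigma_r$ in immediate reward relative to the DT bound. Summing over $\gH$ steps gives the extra $-\gH\sigma_r$.

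\textbf{Step 3 (cost channel).} We repeat the argument with $g^c$, $f^c$, and $\alpha_{f^c}$. The direction flips because the goal is to minimize cost, and the Q-penalty enters with a $+\alpha_2 Q^c$ sign, which increases the probability of selecting minimum-cost actions. This yields the second bound on the event of probability at least $1-\delta_c$.

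The main obstacle is Step~2. The RCSL bound is stated for the exact conditional behavior policy, not for one with a shifted selection probability. Writing the gap as a clean "budget minus $\sigma$ per step" requires care: a shift in action probability could also perturb state visitation, and it is unclear whether the $\epsilon$-terms remain valid under $\hat\pi$. I would first attempt to bound the extra mass placed on optimal actions as directly reducing the expected per-step regret, using binary rewards so that each selection of the optimal action gains exactly one unit. I would treat $\hat\pi$ as a mixture that agrees with the conditional behavior policy except for a $\sigma_r$ reallocation toward optimal actions. Near determinism then keeps the visitation terms within the same $\epsilon$ budget.
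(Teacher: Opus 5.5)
There is a genuine gap, and it is the one you flagged: Step~2 is never actually proved. You try to subtract $\sigma_r$ from inside the RCSL argument. That forces you to show that the RCSL error terms survive when the DT policy is replaced by $\hat\pi$, and your proposed repair does not achieve this. Near determinism only bounds how often the environment deviates from $(\gT,\gR,\gC)$. It says nothing about how far the state-visitation distribution moves when you change the policy, so it cannot keep the drift caused by a $\sigma_r$ reallocation ``within the same $\epsilon$ budget''. Moreover, nothing in the hypotheses makes $\hat\pi$ a mixture of the conditional behavior policy with a reallocation toward optimal actions. Your Step~1 sketch also never produces a per-step term in which $P^r_i$ enters linearly, so there is nothing concrete to subtract $\sigma_r$ from.

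The missing idea is that you never need the RCSL bound for $\hat\pi$. Use the DT-trained policy $\pi$ as an intermediate:
\[
J^r(\pi^*)-J^r(\hat\pi)=\bigl[J^r(\pi^*)-J^r(\pi)\bigr]+\bigl[J^r(\pi)-J^r(\hat\pi)\bigr].
\]
The first bracket is the RCSL lemma applied as a black box to $\pi$, exactly the policy it is stated for, giving $\epsilon(1/\alpha_{f^r}+3)\gH^2$. The second bracket is a direct comparison of $\pi$ and $\hat\pi$. With binary rewards, $J^r(\pi)=\sum_{t=1}^{\gH}P^r_t$, once $P^r_t$ is read as the marginal probability, under $\pi$'s own rollouts, that step $t$ collects the unit reward. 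This is how the paper's expression $\E_{\rvs_1}\sum_t P^r_t r_t$ must be read, with the optimal action's reward equal to $1$; the same holds for $\hat\pi$. Because these are marginals over each policy's own trajectory distribution, the visitation shift you were worried about is already absorbed. On the event $\{\hat P^r_t-P^r_t\ge\sigma_r\ \forall t\}$ you then get $J^r(\pi)-J^r(\hat\pi)=\sum_t(P^r_t-\hat P^r_t)\le-\gH\sigma_r$.

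The cost part is identical. Use the cost analogue of the lemma for $J^c(\pi)-J^c(\pi^*)$, and write $J^c(\pi)=\sum_t(1-P^c_t)$ with $P^c_t$ the probability of a zero-cost step, so that $J^c(\hat\pi)-J^c(\pi)=\sum_t(P^c_t-\hat P^c_t)\le-\gH\sigma_c$. You had the right ingredient: with binary signals, each extra unit of probability on the optimal action is worth one unit of reward. But you attached it to the RCSL bound instead of to $J^r(\pi)$ itself, and reconstructing the RCSL skeleton is unnecessary.
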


\begin{table*}[!t]
\centering
\caption{ Complete evaluation results of the normalized reward and cost. The cost threshold is 1.
The $\uparrow$ symbol denotes that the higher reward, the better. The $\downarrow$ symbol denotes that the lower normalized cost (up to threshold 1, corresponding to original cost limit 10), the better. 
Each value is averaged over 20 evaluation episodes and 3 random seeds.
\textbf{Bold}: Safe agents whose normalized cost is smaller than 1. 
{\color[HTML]{656565} Gray}: Unsafe agents with normalized costs exceeding 1.
{\color[HTML]{0000FF} \textbf{Blue}}: Safe agent with the highest reward.} 
\label{tab:res}
\renewcommand{\arraystretch}{1.1}
\resizebox{1.\linewidth}{!}{
\begin{tabular}{|c|cc|cc|cc|cc|cc|cc|cc|cc|cc|}
\hline
& \multicolumn{2}{c|}{\textbf{QPT (Ours)}} & \multicolumn{2}{c|}{BC-Safe} & \multicolumn{2}{c|}{CDT} & \multicolumn{2}{c|}{BCQ-Lag} & \multicolumn{2}{c|}{CPQ} & \multicolumn{2}{c|}{COptiDICE} & \multicolumn{2}{c|}{FISOR} & \multicolumn{2}{c|}{OASIS} & \multicolumn{2}{c|}{CAPS} \\ \cline{2-19} 
\multirow{-2}{*}{Task} 
& reward $\uparrow$ & cost $\downarrow$ 
& reward $\uparrow$ & cost $\downarrow$ 
& reward $\uparrow$ & cost $\downarrow$ 
& reward $\uparrow$ & cost $\downarrow$ 
& reward $\uparrow$ & cost $\downarrow$ 
& reward $\uparrow$ & cost $\downarrow$ 
& reward $\uparrow$  & cost $\downarrow$  
& reward $\uparrow$  & cost $\downarrow$  
& reward $\uparrow$  & cost $\downarrow$  \\ \hline
PointButton1                                                         & {\color[HTML]{0000FF}\textbf{0.13}}   & {\color[HTML]{0000FF}\textbf{0.81}} & \textbf{0.10}  & \textbf{0.63} & {\color[HTML]{656565} 0.62}          & {\color[HTML]{656565} 7.17}          & {\color[HTML]{656565} 0.24}          & {\color[HTML]{656565} 1.73}    & {\color[HTML]{656565} 0.69}          & {\color[HTML]{656565} 3.2}           & {\color[HTML]{656565} 0.13}          & {\color[HTML]{656565} 1.35} & \textbf{0.03} & \textbf{0.81} & {\color[HTML]{656565} 0.26 } & {\color[HTML]{656565} 1.33 } & \textbf{0.02} & \textbf{0.30 }        \\
PointButton2                                                         &  \textbf{-0.01}  &  \textbf{0.88} & {\color[HTML]{0000FF} \textbf{0.04}}           & {\color[HTML]{0000FF} \textbf{0.58}}           & {\color[HTML]{656565} 0.31}          & {\color[HTML]{656565} 5.15}          & {\color[HTML]{656565} 0.4}           & {\color[HTML]{656565} 2.66}          & {\color[HTML]{656565} 0.58}          & {\color[HTML]{656565} 4.3}           & {\color[HTML]{656565} 0.15}          & {\color[HTML]{656565} 1.51}  & \textbf{0.02} & \textbf{0.69}  &  {\color[HTML]{656565} 0.31} & {\color[HTML]{656565} 1.89}  & \textbf{0.01} & \textbf{0.92 }      \\
PointCircle1  & {\color[HTML]{0000FF} \textbf{0.58}}  & {\color[HTML]{0000FF} \textbf{0.93}} & \textbf{0.45}  & \textbf{0.67} &  \textbf{0.57} & \textbf{0.75} & {\color[HTML]{656565} 0.17} & {\color[HTML]{656565} 1.04} & \textbf{0.43} & \textbf{0.29} & {\color[HTML]{656565} 0.78} & {\color[HTML]{656565} 15.64} & {\color[HTML]{656565} 0.60} & {\color[HTML]{656565} 12.8} 
& \textbf{0.32} & \textbf{0.00} & {\color[HTML]{656565} 0.25} & {\color[HTML]{656565} 2.10} \\
PointCircle2  & {\color[HTML]{0000FF} \textbf{0.62}}  & {\color[HTML]{0000FF} \textbf{0.92}} & \textbf{0.49}  & \textbf{0.44} & {\color[HTML]{656565} 0.61} & {\color[HTML]{656565} 1.39} & {\color[HTML]{656565} 0.53} & {\color[HTML]{656565} 8.35} & \textbf{0.28} & \textbf{0.77} & {\color[HTML]{656565} 0.78} & {\color[HTML]{656565} 25.94} & {\color[HTML]{656565} 0.70} & {\color[HTML]{656565} 11.79}
& \textbf{0.36} & \textbf{0.20} & \textbf{0.38} & \textbf{0.40} \\
PointGoal1    & {\color[HTML]{0000FF} \textbf{0.68}}  & {\color[HTML]{0000FF} \textbf{0.65}} & \textbf{0.42}  & \textbf{0.70} & {\color[HTML]{656565} 0.70} & {\color[HTML]{656565} 1.54} & {\color[HTML]{656565} 0.59} & {\color[HTML]{656565} 1.30} & \textbf{0.68} & \textbf{0.76} & {\color[HTML]{656565} 0.35} & {\color[HTML]{656565} 1.75} & {\color[HTML]{656565} 0.54} & {\color[HTML]{656565} 2.73}
& {\color[HTML]{656565} 0.66} & {\color[HTML]{656565} 4.35} & {\color[HTML]{656565} 0.22} & {\color[HTML]{656565} 1.05} \\
PointGoal2    & {\color[HTML]{0000FF} \textbf{0.18}}  & {\color[HTML]{0000FF} \textbf{0.76}} & \textbf{0.16}  & \textbf{0.29} & {\color[HTML]{656565} 0.57} & {\color[HTML]{656565} 3.45} & {\color[HTML]{656565} 0.71} & {\color[HTML]{656565} 7.53} & {\color[HTML]{656565} 0.08} & {\color[HTML]{656565} 2.14} & {\color[HTML]{656565} 0.42} & {\color[HTML]{656565} 2.71} & \textbf{0.04} & \textbf{0.14}
& {\color[HTML]{656565} 0.31} & {\color[HTML]{656565} 3.00} & {\color[HTML]{656565} 0.20} & {\color[HTML]{656565} 1.21} \\
PointPush1    & {\color[HTML]{0000FF}\textbf{0.34}} & {\color[HTML]{0000FF}\textbf{0.81}} & \textbf{0.17} & \textbf{0.72} & {\color[HTML]{656565} 0.23} & {\color[HTML]{656565} 1.65} & {\color[HTML]{656565} 0.19} & {\color[HTML]{656565} 1.05} & \textbf{0.21} & \textbf{0.29} & \textbf{0.12} & \textbf{0.82} & \textbf{0.28} & \textbf{0.54}
& {\color[HTML]{656565} 0.00} & {\color[HTML]{656565} 1.95} & \textbf{0.16} & \textbf{0.64} \\
PointPush2    & {\color[HTML]{0000FF}\textbf{0.18}} & {\color[HTML]{0000FF}\textbf{0.90}} & \textbf{0.15} & \textbf{0.76} & {\color[HTML]{656565} 0.18} & {\color[HTML]{656565} 1.69} & {\color[HTML]{656565} 0.12} & {\color[HTML]{656565} 1.19} & \textbf{0.14} & \textbf{0.56} & {\color[HTML]{656565} 0.08} & {\color[HTML]{656565} 1.19} & \textbf{0.05} & \textbf{0.27}
& {\color[HTML]{656565} -1.30} & {\color[HTML]{656565} 1.33} & {\color[HTML]{656565} 0.10} & {\color[HTML]{656565} 2.29} \\
CarButton1    & \textbf{-0.15} & \textbf{0.87} & {\color[HTML]{0000FF} \textbf{0.05}} & {\color[HTML]{0000FF} \textbf{0.51}} & {\color[HTML]{656565} 0.16} & {\color[HTML]{656565} 4.91} & {\color[HTML]{656565} 0.04} & {\color[HTML]{656565} 1.63} & {\color[HTML]{656565} 0.42} & {\color[HTML]{656565} 9.66} & {\color[HTML]{656565} -0.08} & {\color[HTML]{656565} 1.68} & \textbf{0.02} & \textbf{0.12}
& {\color[HTML]{656565} -0.43} & {\color[HTML]{656565} 9.15} & \textbf{-0.08} & \textbf{0.45} \\
CarButton2    & \textbf{-0.33} & \textbf{0.99} & \textbf{-0.01} & \textbf{0.71} & {\color[HTML]{656565} 0.08} & {\color[HTML]{656565} 5.87} & {\color[HTML]{656565} 0.06} & {\color[HTML]{656565} 2.13} & {\color[HTML]{656565} 0.37} & {\color[HTML]{656565} 12.51} & {\color[HTML]{656565} -0.07} & {\color[HTML]{656565} 1.59} & {\color[HTML]{0000FF} \textbf{0.01}} & {\color[HTML]{0000FF} \textbf{0.20}}
& {\color[HTML]{656565} -0.01} & {\color[HTML]{656565} 17.17} & {\color[HTML]{656565} -0.09} & {\color[HTML]{656565} 1.58} \\
CarCircle1    & {\color[HTML]{0000FF} \textbf{0.31}} & {\color[HTML]{0000FF} \textbf{0.39}} & \textbf{0.21} & \textbf{0.95} & {\color[HTML]{656565} 0.45} & {\color[HTML]{656565} 4.62} & {\color[HTML]{656565} 0.59} & {\color[HTML]{656565} 11.06} & {\color[HTML]{656565} -0.09} & {\color[HTML]{656565} 1.02} & {\color[HTML]{656565} 0.64} & {\color[HTML]{656565} 15.47} & {\color[HTML]{656565} 0.60} & {\color[HTML]{656565} 6.54}
& {\color[HTML]{656565} 0.41} & {\color[HTML]{656565} 4.16} & {\color[HTML]{656565} 0.46} & {\color[HTML]{656565} 4.41} \\
CarCircle2    & \textbf{0.48} & \textbf{0.94} & {\color[HTML]{656565} 0.54} & {\color[HTML]{656565} 3.38} & {\color[HTML]{656565} 0.45} & {\color[HTML]{656565} 6.24} & {\color[HTML]{656565} 0.53} & {\color[HTML]{656565} 8.35} & {\color[HTML]{0000FF} \textbf{0.50}} & {\color[HTML]{0000FF} \textbf{0.13}} & {\color[HTML]{656565} 0.64} & {\color[HTML]{656565} 18.15} & {\color[HTML]{656565} 0.45} & {\color[HTML]{656565} 1.46}
& {\color[HTML]{656565} 0.75} & {\color[HTML]{656565} 30.71} & {\color[HTML]{656565} 0.39} & {\color[HTML]{656565} 2.36} \\
CarGoal1      & {\color[HTML]{0000FF} \textbf{0.60}} & {\color[HTML]{0000FF} \textbf{0.44}} & \textbf{0.39} & \textbf{0.25} & {\color[HTML]{656565} 0.72} & {\color[HTML]{656565} 2.25} & {\color[HTML]{656565} 0.44} & {\color[HTML]{656565} 2.76} & {\color[HTML]{656565} 0.33} & {\color[HTML]{656565} 4.93} & {\color[HTML]{656565} 0.43} & {\color[HTML]{656565} 2.81} & \textbf{0.49} & \textbf{0.83}
& \textbf{-0.20} & \textbf{0.86} & {\color[HTML]{656565} 0.29} & {\color[HTML]{656565} 1.25} \\
CarGoal2      & {\color[HTML]{0000FF} \textbf{0.28}} & {\color[HTML]{0000FF} \textbf{0.64}} & \textbf{0.19} & \textbf{0.68} & {\color[HTML]{656565} 0.39} & {\color[HTML]{656565} 3.53} & {\color[HTML]{656565} 0.34} & {\color[HTML]{656565} 4.72} & {\color[HTML]{656565} 0.10} & {\color[HTML]{656565} 6.31} & {\color[HTML]{656565} 0.19} & {\color[HTML]{656565} 2.83} & \textbf{0.06} & \textbf{0.33}
& {\color[HTML]{656565} -0.51} & {\color[HTML]{656565} 2.02} & {\color[HTML]{656565} 0.09} & {\color[HTML]{656565} 1.05} \\
CarPush1      & {\color[HTML]{0000FF} \textbf{0.34}} & {\color[HTML]{0000FF} \textbf{0.65}} & \textbf{0.23} & \textbf{0.35} & \textbf{0.34} & \textbf{0.79} & {\color[HTML]{656565} 0.23} & {\color[HTML]{656565} 1.33} & \textbf{0.08} & \textbf{0.77} & {\color[HTML]{656565} 0.21} & {\color[HTML]{656565} 1.28} & \textbf{0.28} & \textbf{0.28}
& \textbf{-0.12} & \textbf{0.56} & \textbf{0.17} & \textbf{0.77} \\
CarPush2      & {\color[HTML]{0000FF} \textbf{0.18}} & {\color[HTML]{0000FF} \textbf{0.61}} & \textbf{0.10} & \textbf{0.91} & {\color[HTML]{656565} 0.11} & {\color[HTML]{656565} 2.33} & {\color[HTML]{656565} 0.10} & {\color[HTML]{656565} 2.78} & {\color[HTML]{656565} -0.03} & {\color[HTML]{656565} 10.00} & {\color[HTML]{656565} 0.10} & {\color[HTML]{656565} 4.55} & \textbf{0.14} & \textbf{0.89}
& {\color[HTML]{656565} -1.15} & {\color[HTML]{656565} 9.06} & {\color[HTML]{656565} 0.03} & {\color[HTML]{656565} 1.92} \\
SwimmerVelocity & {\color[HTML]{0000FF} \textbf{0.65}} & {\color[HTML]{0000FF} \textbf{0.58}} & \textbf{0.55} & \textbf{0.89} & \textbf{0.65} & \textbf{0.94} & {\color[HTML]{656565} 0.29} & {\color[HTML]{656565} 4.10} & {\color[HTML]{656565} 0.31} & {\color[HTML]{656565} 11.58} & {\color[HTML]{656565} 0.58} & {\color[HTML]{656565} 23.64} & \textbf{-0.04} & \textbf{0.00}
& \textbf{0.56} & \textbf{0.65} & {\color[HTML]{656565} 0.41} & {\color[HTML]{656565} 2.56} \\
HopperVelocity  & {\color[HTML]{0000FF} \textbf{0.88}} & {\color[HTML]{0000FF} \textbf{0.45}} & \textbf{0.58} & \textbf{0.45} & \textbf{0.76} & \textbf{0.97} & \textbf{0.12} & \textbf{0.97} & \textbf{0.57} & \textbf{0.00} & {\color[HTML]{656565} 0.23} & {\color[HTML]{656565} 1.44} & \textbf{0.19} & \textbf{0.51}
& \textbf{0.35} & \textbf{0.79} & \textbf{0.34} & \textbf{0.35} \\
HalfCheetahVelocity & {\color[HTML]{0000FF} \textbf{1.01}} & {\color[HTML]{0000FF} \textbf{0.03}} & \textbf{0.90} & \textbf{0.53} & \textbf{1.01} & \textbf{0.28} & {\color[HTML]{656565} 1.04} & {\color[HTML]{656565} 57.06} & {\color[HTML]{656565} 0.08} & {\color[HTML]{656565} 2.56} & \textbf{0.43} & \textbf{0.00} & \textbf{0.89} & \textbf{0.00}
& \textbf{0.65} & \textbf{0.25} & \textbf{0.87} & \textbf{0.76} \\
Walker2dVelocity & {\color[HTML]{0000FF} \textbf{0.83}} & {\color[HTML]{0000FF} \textbf{0.47}} & \textbf{0.81} & \textbf{0.31} & \textbf{0.83} & \textbf{0.97} & \textbf{0.81} & \textbf{0.37} & \textbf{0.31} & \textbf{0.65} & \textbf{0.09} & \textbf{0.84} & \textbf{0.23} & \textbf{0.83}
& {\color[HTML]{656565} 0.75} & {\color[HTML]{656565} 1.42} & \textbf{0.78} & \textbf{0.08} \\
AntVelocity     & {\color[HTML]{0000FF} \textbf{0.99}} & {\color[HTML]{0000FF} \textbf{0.78}} & \textbf{0.96} & \textbf{0.89} & \textbf{0.98} & \textbf{0.94} & {\color[HTML]{656565} 0.85} & {\color[HTML]{656565} 18.54} & \textbf{-1.01} & \textbf{0.00} & {\color[HTML]{656565} 1.00} & {\color[HTML]{656565} 10.29} & \textbf{0.89} & \textbf{0.00}
& \textbf{0.84} & \textbf{0.93} & \textbf{0.85} & \textbf{0.16} \\ \hline

\textbf{\begin{tabular}[c]{@{}c@{}}SafetyGym\\ Average\end{tabular}} 
& {\color[HTML]{0000FF} \textbf{0.42}} & {\color[HTML]{0000FF} \textbf{0.69}} & \textbf{0.36} & \textbf{0.74} & {\color[HTML]{656565} 0.51} & {\color[HTML]{656565} 2.73} & {\color[HTML]{656565} 0.40} & {\color[HTML]{656565} 6.70} & {\color[HTML]{656565} 0.24} & {\color[HTML]{656565} 3.45} & {\color[HTML]{656565} 0.34} & {\color[HTML]{656565} 6.45} & {\color[HTML]{656565} 0.31} & {\color[HTML]{656565} 2.01}
& {\color[HTML]{656565} 0.13} & {\color[HTML]{656565} 4.37} & {\color[HTML]{656565} 0.28} & {\color[HTML]{656565} 1.27} \\ \hline

BallRun    & \textbf{0.31} & \textbf{0.00} & \textbf{0.29} & \textbf{0.37} & \textbf{0.32} & \textbf{1.00} & \textbf{0.30} & \textbf{0.89} & {\color[HTML]{0000FF} \textbf{0.33}} & {\color[HTML]{0000FF} \textbf{0.00}} & \textbf{0.26} & \textbf{0.96} & \textbf{0.24} & \textbf{0.00}
& \textbf{0.27} & \textbf{0.00} & \textbf{0.17} & \textbf{0.02} \\
CarRun     & {\color[HTML]{0000FF} \textbf{0.99}} & {\color[HTML]{0000FF} \textbf{0.30}} & \textbf{0.98} & \textbf{0.34} & \textbf{0.99} & \textbf{0.78} & \textbf{0.98} & \textbf{0.13} & \textbf{0.98} & \textbf{0.23} & \textbf{0.95} & \textbf{0.54} & \textbf{0.76} & \textbf{0.00}
& \textbf{0.67} & \textbf{0.00} & \textbf{0.98} & \textbf{0.22} \\
DroneRun   & {\color[HTML]{0000FF} \textbf{0.60}} & {\color[HTML]{0000FF} \textbf{0.48}} & \textbf{0.57} & \textbf{0.00} & \textbf{0.59} & \textbf{0.80} & {\color[HTML]{656565} 0.68} & {\color[HTML]{656565} 4.47} & \textbf{0.46} & \textbf{0.00} & {\color[HTML]{656565} 0.57} & {\color[HTML]{656565} 6.67} & \textbf{0.31} & \textbf{0.16}
& {\color[HTML]{656565} 0.44} & {\color[HTML]{656565} 3.29} & {\color[HTML]{656565} 0.41} & {\color[HTML]{656565} 3.28} \\
AntRun     & {\color[HTML]{0000FF} \textbf{0.73}} & {\color[HTML]{0000FF} \textbf{0.82}} & \textbf{0.70} & \textbf{0.79} & \textbf{0.72} & \textbf{0.99} & \textbf{0.58} & \textbf{0.77} & \textbf{0.09} & \textbf{0.46} & \textbf{0.61} & \textbf{0.92} & \textbf{0.52} & \textbf{0.83}
& {\color[HTML]{656565} 0.61} & {\color[HTML]{656565} 1.70} & {\color[HTML]{656565} 0.54} & {\color[HTML]{656565} 1.03} \\
BallCircle & {\color[HTML]{0000FF} \textbf{0.68}} & {\color[HTML]{0000FF} \textbf{0.95}} & \textbf{0.55} & \textbf{0.08} & \textbf{0.68} & \textbf{0.97} & {\color[HTML]{656565} 0.68} & {\color[HTML]{656565} 1.57} & \textbf{0.71} & \textbf{0.30} & {\color[HTML]{656565} 0.64} & {\color[HTML]{656565} 3.28} & \textbf{0.36} & \textbf{0.00}
& \textbf{0.72} & \textbf{0.62} & \textbf{0.58} & \textbf{0.28} \\
CarCircle  & \textbf{0.67} & \textbf{0.90} & \textbf{0.55} & \textbf{0.43} & {\color[HTML]{0000FF} \textbf{0.73}} & {\color[HTML]{0000FF} \textbf{0.83}} & {\color[HTML]{656565} 0.46} & {\color[HTML]{656565} 1.42} & \textbf{0.73} & \textbf{0.89} & {\color[HTML]{656565} 0.46} & {\color[HTML]{656565} 2.78} & \textbf{0.42} & \textbf{0.16}
& {\color[HTML]{656565} 0.72} & {\color[HTML]{656565} 2.42} & \textbf{0.56} & \textbf{0.42} \\
DroneCircle & {\color[HTML]{0000FF} \textbf{0.60}} & {\color[HTML]{0000FF} \textbf{0.92}} & \textbf{0.57} & \textbf{0.56} & \textbf{0.58} & \textbf{0.96} & \textbf{0.52} & \textbf{0.98} & \textbf{-0.20} & \textbf{0.45} & \textbf{0.26} & \textbf{0.51} & \textbf{0.49} & \textbf{0.00}
& \textbf{0.36} & \textbf{0.24} & \textbf{0.51} & \textbf{0.49} \\
AntCircle  & \textbf{0.41} & \textbf{0.41} & {\color[HTML]{0000FF} \textbf{0.45}} & {\color[HTML]{0000FF} \textbf{0.98}} & {\color[HTML]{656565} 0.31} & {\color[HTML]{656565} 1.25} & {\color[HTML]{656565} 0.57} & {\color[HTML]{656565} 2.11} & \textbf{0.02} & \textbf{0.00} & {\color[HTML]{656565} 0.10} & {\color[HTML]{656565} 1.31} & \textbf{0.29} & \textbf{0.00}
& {\color[HTML]{656565} 0.35} & {\color[HTML]{656565} 1.04} & \textbf{0.34} & \textbf{0.01} \\ \hline

\textbf{\begin{tabular}[c]{@{}c@{}}BulletGym\\ Average\end{tabular}} 
& {\color[HTML]{0000FF} \textbf{0.62}} & {\color[HTML]{0000FF} \textbf{0.60}} & \textbf{0.58} & \textbf{0.44} & \textbf{0.61} & \textbf{0.95} & {\color[HTML]{656565} 0.60} & {\color[HTML]{656565} 1.54} & \textbf{0.39} & \textbf{0.29} & {\color[HTML]{656565} 0.48} & {\color[HTML]{656565} 2.12} & \textbf{0.42} & \textbf{0.14}
& {\color[HTML]{656565} 0.52} & {\color[HTML]{656565} 1.16} & \textbf{0.51} & \textbf{0.72} \\ \hline

easysparse  & {\color[HTML]{0000FF} \textbf{0.70}} & {\color[HTML]{0000FF} \textbf{0.98}} & \textbf{0.28} & \textbf{0.20} & \textbf{0.51} & \textbf{0.76} & \textbf{0.09} & \textbf{0.92} & \textbf{-0.05} & \textbf{0.19} & \textbf{0.07} & \textbf{0.86} & \textbf{0.44} & \textbf{0.28}
& \textbf{-0.01} & \textbf{0.09} & \textbf{0.19} & \textbf{0.10} \\
eastmean    & {\color[HTML]{0000FF} \textbf{0.67}} & {\color[HTML]{0000FF} \textbf{0.99}} & \textbf{0.49} & \textbf{0.06} & \textbf{0.52} & \textbf{0.99} & \textbf{0.08} & \textbf{0.70} & \textbf{-0.06} & \textbf{0.00} & \textbf{0.04} & \textbf{0.83} & \textbf{0.40} & \textbf{0.30}
& \textbf{-0.01} & \textbf{0.10} & \textbf{0.08} & \textbf{0.10} \\
easydense   & {\color[HTML]{0000FF} \textbf{0.65}} & {\color[HTML]{0000FF} \textbf{0.50}} & \textbf{0.59} & \textbf{0.01} & \textbf{0.47} & \textbf{0.87} & \textbf{0.04} & \textbf{0.80} & \textbf{-0.05} & \textbf{0.10} & {\color[HTML]{656565} 0.17} & {\color[HTML]{656565} 1.54} & \textbf{0.46} & \textbf{0.64}
& \textbf{-0.02} & \textbf{0.10} & \textbf{0.03} & \textbf{0.10} \\
mediumsparse & {\color[HTML]{0000FF} \textbf{0.97}} & {\color[HTML]{0000FF} \textbf{0.76}} & \textbf{0.50} & \textbf{0.10} & \textbf{0.52} & \textbf{0.03} & \textbf{0.92} & \textbf{0.42} & \textbf{-0.07} & \textbf{0.00} & \textbf{0.05} & \textbf{0.72} & \textbf{0.73} & \textbf{0.06}
& \textbf{-0.03} & \textbf{0.08} & \textbf{0.22} & \textbf{0.09} \\
mediummean  & {\color[HTML]{0000FF} \textbf{0.97}} & {\color[HTML]{0000FF} \textbf{0.66}} & \textbf{0.36} & \textbf{0.05} & \textbf{0.68} & \textbf{0.97} & \textbf{0.03} & \textbf{0.68} & \textbf{-0.06} & \textbf{0.00} & \textbf{0.09} & \textbf{0.77} & \textbf{0.52} & \textbf{0.01}
& \textbf{-0.03} & \textbf{0.10} & \textbf{0.42} & \textbf{0.08} \\
mediumdense & {\color[HTML]{0000FF} \textbf{0.97}} & {\color[HTML]{0000FF} \textbf{0.95}} & \textbf{0.25} & \textbf{0.10} & \textbf{0.25} & \textbf{0.10} & \textbf{0.94} & \textbf{0.29} & \textbf{-0.05} & \textbf{0.00} & \textbf{0.00} & \textbf{0.31} & \textbf{0.81} & \textbf{0.15}
& \textbf{-0.02} & \textbf{0.08} & \textbf{0.33} & \textbf{0.09} \\
hardsparse  & {\color[HTML]{0000FF} \textbf{0.44}} & {\color[HTML]{0000FF} \textbf{0.98}} & \textbf{0.24} & \textbf{0.00} & \textbf{0.37} & \textbf{0.48} & \textbf{0.47} & \textbf{0.80} & \textbf{-0.05} & \textbf{0.06} & {\color[HTML]{656565} 0.16} & {\color[HTML]{656565} 1.92} & \textbf{0.32} & \textbf{0.01}
& \textbf{-0.01} & \textbf{0.10} & \textbf{0.20} & \textbf{0.10} \\
hardmean    & {\color[HTML]{0000FF} \textbf{0.48}} & {\color[HTML]{0000FF} \textbf{0.94}} & \textbf{0.30} & \textbf{0.28} & \textbf{0.20} & \textbf{0.77} & \textbf{-0.01} & \textbf{0.44} & \textbf{-0.04} & \textbf{0.16} & \textbf{0.03} & \textbf{0.82} & \textbf{0.30} & \textbf{0.01}
& \textbf{-0.03} & \textbf{0.08} & \textbf{0.17} & \textbf{0.10} \\
harddense   & {\color[HTML]{0000FF} \textbf{0.50}} & {\color[HTML]{0000FF} \textbf{0.81}} & \textbf{0.27} & \textbf{0.39} & \textbf{0.24} & \textbf{0.16} & \textbf{0.05} & \textbf{0.74} & \textbf{-0.05} & \textbf{0.00} & \textbf{0.02} & \textbf{0.57} & \textbf{0.39} & \textbf{0.32}
& \textbf{-0.02} & \textbf{0.05} & \textbf{0.19} & \textbf{0.10} \\ \hline

\textbf{\begin{tabular}[c]{@{}c@{}}MetaDrive\\ Average\end{tabular}} 
& {\color[HTML]{0000FF} \textbf{0.71}} & {\color[HTML]{0000FF} \textbf{0.84}} & \textbf{0.36} & \textbf{0.13} & \textbf{0.42} & \textbf{0.57} & \textbf{0.29} & \textbf{0.64} & \textbf{-0.05} & \textbf{0.10} & \textbf{0.07} & \textbf{0.93} & \textbf{0.49} & \textbf{0.20}
& \textbf{-0.02} & \textbf{0.09} & \textbf{0.20} & \textbf{0.10} \\ \hline

\end{tabular}
}
\vspace{-.4cm}
\end{table*}

Theorem~\ref{thm:improve} establishes that, with high probability, Q-penalized training leads to improved expected reward and reduced expected cost relative to the baseline sequence-model training.
The complete proof and discussion are deferred to Appendix~\ref{sec:proofoft1}.
\section{Experiment}
\label{sec:Exp}
\textbf{Experimental Setups.} We conducted extensive evaluations on tasks from \textit{Safety-Gymnasium} \citep{ray2019benchmarking, ji2024omnisafe}, \textit{Bullet-Safety-Gym} \citep{gronauer2022bullet}, and \textit{MetaDrive} \citep{li2022metadrive}, utilizing the DSRL benchmark \citep{liu2023datasets} to assess the performance of QPT against state-of-the-art safe offline RL methods. 
The evaluation metrics used are normalized return and normalized cost, where a normalized cost below 1 is indicative of safety. In accordance with the DSRL benchmark, safety is prioritized as the primary evaluation criterion, with higher rewards pursued only after meeting safety requirements. To ensure fair comparisons, we set the cost limit for all tasks to 10.

\textbf{Baselines.} We compared our method with four types of baseline methods:
(1) \emph{Q-learning-based methods}: CPQ \citep{xu2022constraints}, BCQ-Lag \citep{fujimoto2019off, stooke2020responsive}, and CAPS \citep{chemingui2025constraint}, which augments a Q-learning backbone with an additional wrapper to enforce cost constraints;
(2) Distribution correction estimation: COptiDICE \citep{lee2022coptidice, lee2021optidice};
(3) Imitation learning: Behavior Cloning (BC-Safe) \citep{liu2023datasets}, which is trained exclusively on safe trajectories that satisfy safety constraints, FISOR \citep{zheng2024safe}, which leverages diffusion models for the development of safe policies, and OASIS \citep{yao2024oasis}, which uses diffusion-based data augmentation with BCQ-Lag as the underlying policy;
(4) Sequential modeling algorithms: CDT \citep{liu2023constrained}, which incorporates cost-to-go token in the training process.
The codebase for these baseline methods are sourced from \citet{liu2023datasets} and executed by us to ensure a fair comparison.
For evaluation, when the normalized cost is below 1, we select the configuration with the highest normalized reward. Otherwise, we prioritize minimizing normalized cost and report the corresponding normalized reward.

\textbf{Metrics.} We use the normalized cost return and the normalized reward return as the evaluation metric for comparison.
Denote $r_{max}(\gM)$ and $r_{min}(\gM)$ as the maximum empirical reward return and the minimum empirical reward return for task $\gM$.
The normalized reward is computed by:
\begin{equation}
    R_{\text{normalized}} = \frac{R_{\pi} - r_{min}(\gM)}{r_{max}(\gM) - r_{min}(\gM)} \times 100,
\end{equation}
where $R_{\pi}$ denotes the evaluated reward return of policy $\pi$.
While the normalized cost is computed by the ratio between the evaluated cost return $C_{\pi}$ and the target threshold $\kappa$:
\begin{equation}
    C_{\text{normalized}} = \frac{C_{\pi} + \epsilon}{\kappa + \epsilon},
\end{equation}
where $\epsilon$ is a positive number to ensure numerical stability if the threshold $\kappa = 0$. 
The agent is safe if $C_{\text{normalized}} \leq 1$.
Without otherwise statements, we will abbreviate ``normalized cost return" as ``cost" and ``normalized reward return" as ``reward" for simplicity.

\begin{table*}[!t]
\renewcommand{\arraystretch}{1.0}
  \centering
  \caption{Impact of different components. Average scores and standard deviations are reported over three random seeds for the \textit{harddense} task in the \textit{MetaDrive} setting and the \textit{HopperVelocity} setting.  ``Train with $Q^r$'' and ``Train with $Q^c$'' indicate whether the corresponding penalization in Equation \ref{eq:final_update} is applied. ``Data aug.'' refers to the use of data augmentation, while ``Inf. with ensemble" denotes ensemble applied at inference time.}
  \label{tab:component}
  \scalebox{0.9}{
  \begin{tabular}{C{0.5cm}C{1cm}C{1.2cm}C{1.2cm}C{1.5cm}C{2cm}C{2cm}C{2cm}C{2cm}}
    \hline
      Exp & Data aug. & Train with $Q^r$ & Train with $Q^c$ & Inf. with ensemble & harddense Reward & harddense Cost & HopperVelocity Reward & HopperVelocity Cost \\
    \hline
     1 &  & & & & $0.37 \pm 0.19$ & $1.00 \pm 0.08$ & $0.04 \pm 0.02$ & $1.49 \pm 0.12$ \\
     2 & \ding{51} & &  & & $0.40 \pm 0.05$  & $0.94 \pm 0.04$ & $0.54 \pm 0.03$ & $0.65 \pm 0.03$ \\ 
     3 & \ding{51} & \ding{51} & & & $0.48 \pm 0.08$ & $0.94 \pm 0.05$ & $0.85 \pm 0.05$ & $0.99 \pm 0.04$ \\
     4 & \ding{51} &  & \ding{51} & &  $0.43 \pm 0.06$ & $0.14 \pm 0.10$ & $0.15 \pm 0.02$ & $0.22 \pm 0.01$ \\
     5 & \ding{51} & \ding{51}  & \ding{51}  &  & $0.49 \pm 0.04$  & $0.84 \pm 0.02$ & $0.69 \pm 0.04$ & $0.51 \pm 0.02$ \\
     6 & \ding{51} & & & \ding{51} & $0.46 \pm 0.05$ & $0.91 \pm 0.06$ & $0.56 \pm 0.02$ & $0.60 \pm 0.04$ \\
     7 &  & \ding{51}  &  \ding{51} & \ding{51} & $0.50 \pm 0.01$& $0.93 \pm 0.02$ & $0.66 \pm 0.03$ & $0.56 \pm 0.03$   \\
     8 & \ding{51} & \ding{51}  &  \ding{51} & \ding{51} & $0.50 \pm 0.02$ & $0.81 \pm 0.03$ & $0.88 \pm 0.02$ & $0.45 \pm 0.04$ \\
    \hline
  \end{tabular}}
  \vspace{-0.4cm}
\end{table*}

\textbf{Main Results.} 
The evaluation results are summarized in Table \ref{tab:res}. QPT stands out as the only method that consistently achieves satisfactory safety performance across all tasks while also attaining the highest returns in most cases. This highlights its effectiveness in simultaneously ensuring safety and achieving high rewards.
In contrast, other methods exhibit significant limitations, either due to severe constraint violations or suboptimal returns.
Notably, BC-Safe, which is trained exclusively on safe trajectories, satisfies most safety requirements but demonstrates conservative performance with comparatively lower rewards.
Q-learning-based algorithms, including BCQ-Lag and CPQ, as well as the distribution correction estimation-based method, COptiDICE, show inconsistent performance. 
These methods tend to oscillate between overly conservative behavior and excessive risk-taking. For instance, CPQ achieves high rewards at the expense of significant safety violations in tasks such as \textit{CarGoal1} and \textit{CarGoal2}, while in \textit{MetaDrive} tasks, it achieves nearly zero cost but at the cost of extremely low rewards.
FISOR employs a diffusion-based architecture to maximize rewards within the largest safe region, thereby offering strong safety guarantees. However, it tends to sacrifice potential rewards by strictly adhering to the safe region, resulting in lower reward values while maintaining predominantly safe cost levels.
CDT, leveraging its advanced architecture and efficient data utilization, demonstrates more balanced performance. However, it still struggles with trade-offs between safety and utility in safe offline RL settings, particularly in \textit{SafetyGym} tasks, where it fails to meet safety requirements in most cases.
In contrast, QPT, which shares the same Transformer architecture as CDT, surpasses it by utilizing our novel framework. QPT achieves the highest returns while consistently satisfying safety requirements, underscoring the efficacy of our proposed method.

\begin{figure*}[!t]
    \centering
    \includegraphics[width=0.9\linewidth]{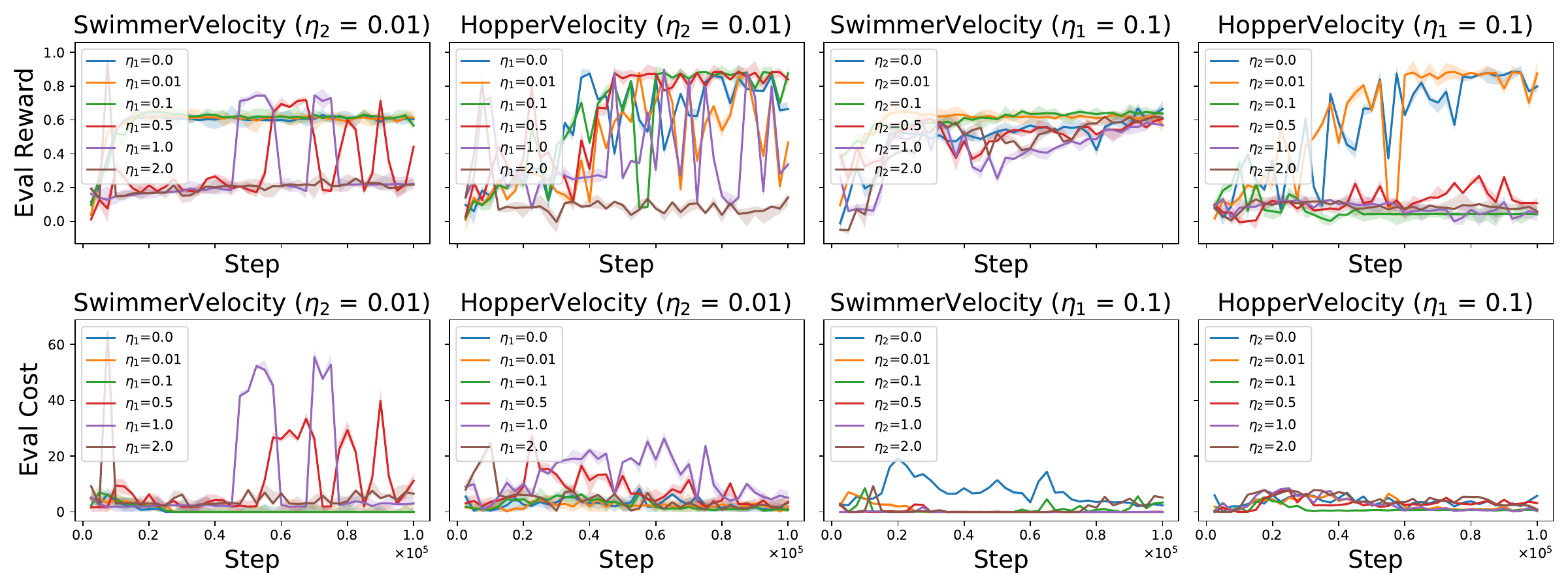}
    \vspace{-.3cm}
    \caption{Results of the impact of hyper-parameters $\eta_1$ and $\eta_2$. Each column is a task with one hyper-parameter stable. The x-axis is the training steps. The first row shows the evaluated normalized reward, and the second row shows the evaluated normalized cost. All plots are averaged among 3 random seeds and 20 trajectories for each seed. The solid line is the mean value, and the light shade represents the area within one standard deviation. }
    \label{fig:ablation_hyper}
    \vspace{-.2cm}
\end{figure*}

\subsection{Ablation}
\label{sec:ab}
\textbf{Role of Different Components.}
As detailed in Section \ref{sec:method}, our methodology incorporates four key components: data augmentation, reward Q-network, cost Q-network, and inference ensemble. Each component warrants individual analysis.
We evaluate these components on the \textit{harddense} dataset from the MetaDrive task and on the \textit{HopperVelocity} task, both selected for their challenging nature in achieving high rewards and for the substantial performance improvements that QPT demonstrates over baseline methods. The results are summarized in Table \ref{tab:component}.
Integrating the $Q^r$ and $Q^c$ networks substantially enhances performance, as evidenced by comparisons between Exp 2 vs. 3 and Exp 2 vs. 4, where the added Q-learning penalization leads to notable improvements in reward and cost metrics.
Furthermore, incorporating an ensemble of learned Q-networks further boosts performance, as shown by comparisons between Exp 2 vs. 6 and Exp 5 vs. 8.
Data augmentation also improves cost performance by ``stitching'' additional safe trajectories into the training dataset, as demonstrated by Exp 1 vs. 2 and Exp 7 vs. 8.
These findings highlight the effectiveness of our method in addressing the challenges of safe offline RL.

\textbf{Hyper-parameters.}
This ablation introduces the hyper-parameters $\eta_1$ and $\eta_2$, as defined in Equation \ref{eq:final_update}, which regulate the influence of two additional loss components.
To evaluate their effects, we conducted an ablation study on two tasks: \textit{SwimmerVelocity} and \textit{HopperVelocity}.
As illustrated in Figure \ref{fig:ablation_hyper}, when $\eta_2$ is held constant and $\eta_1$ is gradually increased, the normalized reward rises within a specific range. 
For instance, in the \textit{HopperVelocity} task, the reward increases consistently when $\eta_1 \leq 1$. However, beyond this point (e.g., $\eta_1 = 2$), the reward decreases sharply with no further performance gains.
Conversely, increasing $\eta_1$ also leads to a corresponding increase in the normalized cost within a certain range, indicating that the policy faces challenges in stitching trajectories with higher associated costs.
Similarly, when $\eta_1$ is kept constant and $\eta_2$ is increased, the normalized reward decreases progressively, accompanied by a reduction in the normalized cost within a certain range. This observation suggests that a larger $\eta_2$ can contribute to a safer policy, albeit with a trade-off in reward performance within specific bounds.

\begin{figure*}[!t]
    \centering
    \includegraphics[width=0.9\linewidth]{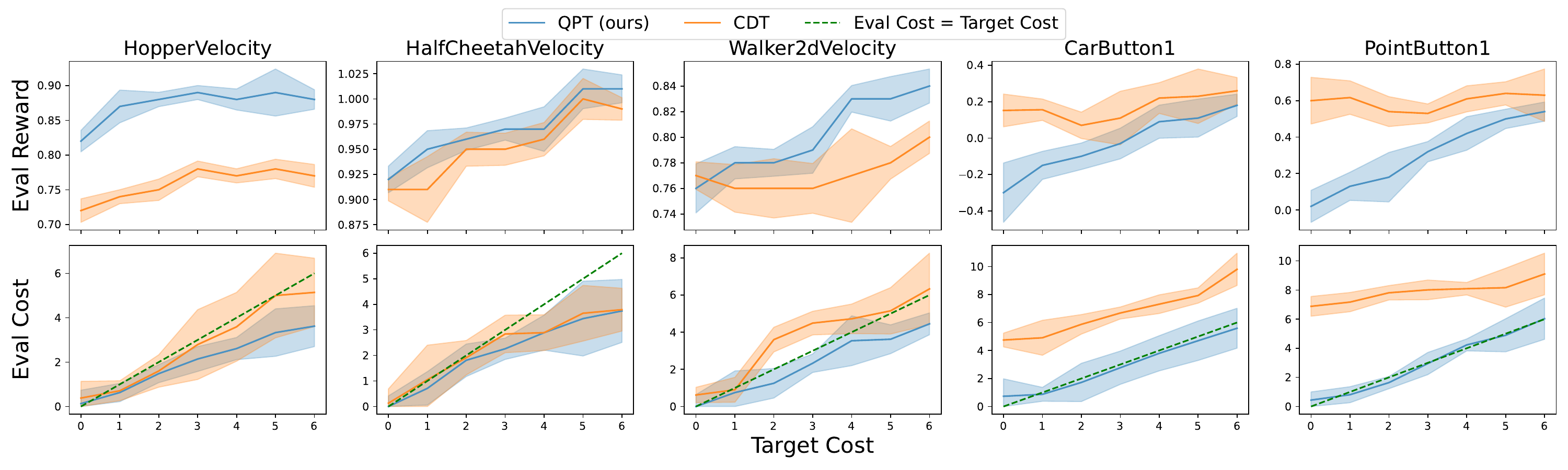}
    \vspace{-.3cm}
    \caption{Results of zero-shot adaption to different cost returns. Each column is a task. The x-axis is the target cost return. The first row shows the evaluated normalized reward, and the second row shows the evaluated normalized cost under different target costs. All plots are averaged among 3 random seeds and 20 trajectories for each seed. The solid line is the mean value, and the light shade represents the area within one standard deviation.}
    \label{fig:zeroshot}
    \vspace{-.5cm}
\end{figure*}

\textbf{Zero-shot Adaptation.}
One significant advantage of the Transformer-based policy is its capability for zero-shot adaptation to varying cost thresholds \citep{liu2023constrained}.
In contrast, the Q-learning-based baselines introduced earlier lack this capability, as they require a fixed, pre-defined threshold to solve constrained optimization problems. 
Adapting these methods to new constraint conditions necessitates re-training, which limits their flexibility.
Consequently, we primarily compare our method with CDT, which also supports zero-shot adaptation.
In this evaluation, each cost-return threshold is treated as a distinct task, with corresponding adjustments made to the target reward.
The results are presented in Figure \ref{fig:zeroshot}.
Both methods demonstrate improved performance when conditioned on a higher cost threshold, highlighting the zero-shot adaptation capability of sequence modeling approaches.
Furthermore, our method consistently outperforms CDT in scenarios where both methods satisfy the constraint, achieving lower costs than the specified threshold (as shown in the left three plots of Figure \ref{fig:zeroshot}).
Although CDT achieves higher rewards than our method in the CarButton1 and PointButton1 environments, it does so at the expense of greater safety violations. In contrast, our method adheres to the cost threshold, underscoring its effectiveness in maintaining safety while delivering competitive performance.
We attribute QPT's stronger zero-shot performance to the way constraint information enters training.
In offline datasets, cost-to-go estimates are often noisy surrogates of the trajectory-level expected cost, and this mismatch can be amplified under distribution shift \citep{QDT}. As a result, directly conditioning on or optimizing against these tokens may miscalibrate the learned reward-cost trade-off.
By contrast, QPT leverages Q-learning guidance to provide a more informative training signal that better reflects the latent cost-to-go distribution, thereby enabling more reliable cost control and a superior reward-cost balance.

\vspace{-.1cm}
\section{Conclusion}
\vspace{-.1cm}
\label{sec:Con}

This paper studies safe offline RL through the lens of trilogy optimization and proposes Q-learning Penalized Transformer policy (QPT).
QPT couples a conditional Transformer policy with reward/cost Q-functions via a Q-shaped objective, enabling behavior-regularized learning that favors high return under low constraint violation.
We provide a theoretical analysis in a stylized setting and evaluate QPT on 38 DSRL tasks, where it consistently outperforms strong safe offline RL baselines and exhibits robust zero-shot adaptation to different constraint thresholds.

\section*{Impact Statement}

This paper presents work regarding the safe policy learned from the offline dataset. While there are many potential societal consequences of our work, we believe that none require specific emphasis in this context.


\bibliography{example_paper}
\bibliographystyle{icml2026}

\newpage
\appendix
\onecolumn

\paragraph{Limitations.} The theoretical guarantees of our algorithms rely on a near-deterministic environment, an assumption that does not always hold in real-world deep learning models. Moreover, current evaluations rely on limited offline data, potentially constraining performance. Extending QPT to a safe offline-to-online RL framework presents a promising direction to enhance performance through online interactions while maintaining safety.

\section{Proof of Theorem \ref{thm:improve}}
\label{sec:proofoft1}

First, we give the following Lemma.

\begin{lemma}[A stylized guarantee for return-conditioned sequence modeling \citep{RCSL}]\label{thm:infinite}
Consider an MDP, behavior $ \beta$ and conditioning function $f^r$. Let $J^r(\pi) = \E_{\tau \sim \pi} [g^r(\tau)]$, where $g^r(\tau) = \sum_{t=1}^\gH r_t$. Assume the following:
\begin{enumerate}
    \item Return coverage: $P_{\beta} (g^r(\tau) = f^r(\rvs_1) |\rvs_1 ) \geq \alpha_{f^r}$ for all initial states $ \rvs_1$.
    \item Near determinism: $ P(r \neq \gR(\rvs, \rva) \text{ or } s' \neq \gT(\rvs, \rva) | \rvs,\rva ) \leq \epsilon$ at all $ s, a $ for some functions $ \gT$ and $ \gR $. Note that this does not constrain the stochasticity of the initial state.
    \item Consistency of $f^r$: $f^r(\rvs) = f^r(\rvs') + r$ for all $\rvs$. \footnote{Note this can be exactly enforced (as in prior work) by augmenting the state space to include the cumulative reward observed so far.}
\end{enumerate}

Then
\begin{align}
    J^r(\pi^*) - J^r(\pi) \leq \epsilon\left( \frac{1}{\alpha_{f^r}} + 3\right)\gH^2,
\end{align}
where $\pi$ is the policy induced by training with Eq.~\ref{eq:DTLoss}, and $\pi^*$ denotes an optimal policy in this stylized setting, and $\gH$ is the horizon length of episode.
Moreover, there exist problems where the bound is tight up to constant factors.
\end{lemma}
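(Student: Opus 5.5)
The plan follows the analysis of \citet{RCSL}: compare everything to the deterministic \emph{surrogate} CMDP $\bar{\gM}$ with dynamics $\gT$ and rewards $\gR$, and charge every deviation from it to a small number of ``noise events''. In the stylized infinite-data regime, the minimizer of Equation~\ref{eq:DTLoss}, read as conditional likelihood maximization, is the return-conditioned behavior policy
\[
\pi(\rva \mid \rvs) = P_\beta\bigl(\rva_t = \rva \mid \rvs_t = \rvs,\ \textstyle\sum_{j \ge t} r_j = f^r(\rvs)\bigr).
\]
I take $f^r(\rvs_1)$ to be the optimal return achievable from $\rvs_1$ in $\bar{\gM}$; this is the choice under which the bound is meaningful, and it is consistent with coverage. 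The target bound $\epsilon(1/\alpha_{f^r}+3)\gH^2$ is then assembled as a telescoping sum of four gaps. One gap costs $\epsilon\gH^2/\alpha_{f^r}$, and each of the other three costs at most $\epsilon\gH^2$.

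\textbf{Step 1 (model mismatch).} I couple $\gM$ and $\bar{\gM}$ so that they agree at every step unless a noise event occurs ($r \ne \gR$ or $\rvs' \ne \gT$). By a union bound over $\gH$ steps, the probability of any noise event is at most $\epsilon\gH$. Returns lie in $[0,\gH]$ because rewards are binary, so for any policy $\mu$ we get $|J^r_{\gM}(\mu) - J^r_{\bar{\gM}}(\mu)| \le \epsilon\gH^2$. Applying this to $\pi^*$ and to $\pi$ accounts for two of the three $\epsilon\gH^2$ terms. The third comes from the gap between the optimal value in $\gM$ and $f^r(\rvs_1)$, which the same coupling bounds by $\epsilon\gH^2$.

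\textbf{Step 2 (RCSL in the deterministic surrogate).} I then show $J^r_{\bar{\gM}}(\pi) \ge \E[f^r(\rvs_1)] - \epsilon\gH^2/\alpha_{f^r}$. Without noise the argument is an induction on $t$. If $\pi$ picks an action that $\beta$ takes on some trajectory with $\sum_{j\ge t} r_j = f^r(\rvs_t)$, then determinism and consistency ($f^r(\rvs_t) = r_t + f^r(\rvs_{t+1})$) put the next state again on a return-consistent trajectory. Inductively, the realized return equals $f^r(\rvs_1)$.

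With noise, the conditioned behavior distribution may owe some of its success to noise events. Here I use return coverage. Conditioning on the event $g^r(\tau) = f^r(\rvs_1)$, which has probability at least $\alpha_{f^r}$, inflates the probability of any noise event by at most $1/\alpha_{f^r}$. So under $P_\beta(\cdot \mid g^r = f^r(\rvs_1))$ a noise event occurs with probability at most $\epsilon\gH/\alpha_{f^r}$. Next I couple the rollout of $\pi$ in $\bar{\gM}$ with the conditioned behavior trajectory, step by step, matching them until a noise event occurs in the latter. On the complementary event both reach return $f^r(\rvs_1)$, and the loss otherwise is at most $\gH$. This gives the $\epsilon\gH^2/\alpha_{f^r}$ term.

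\textbf{Main obstacle.} I expect Step 2's coupling to be the hard part. The policy conditions on $f^r(\rvs_t)$ at \emph{every} step, not only on the initial event, so I must check that the step-wise conditionals compose into the trajectory-level conditional on $g^r = f^r(\rvs_1)$. This relies on the consistency assumption. I also need to handle states where the conditioning event has zero behavior probability. There I would first try to show such states are reached only after a noise event, so they are already paid for in the $\epsilon\gH/\alpha_{f^r}$ budget.

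\textbf{Tightness.} I would reuse the construction of \citet{RCSL}: a chain of length $\gH$ in which $\beta$ reaches the high return only through a rare noisy transition. Then the conditioned policy commits to an action that succeeds with probability only $O(\epsilon)$ per step, matching $\epsilon\gH^2/\alpha_{f^r}$ up to constants.
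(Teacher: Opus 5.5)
Your proposal is correct and uses the same ingredients as the analysis of \citet{RCSL}, which the paper cites in place of a proof. Those ingredients are: under consistency and the Markov property, the return-conditioned behavior distribution draws its actions from $\pi$; coverage inflates its noise probability by at most $1/\alpha_{f^r}$; and near-determinism plus returns in $[0,\gH]$ turn noise probabilities into $\epsilon\gH^2$-type losses. Your deterministic-surrogate coupling organizes this cleanly, and your bookkeeping actually yields $\epsilon(1/\alpha_{f^r}+2)\gH^2$. The reason is that $J^r_{\gM}(\pi^*)-f^r(\rvs_1)$ is already covered by the $\pi^*$ coupling term, because $J^r_{\bar{\gM}}(\pi^*)\le f^r(\rvs_1)$.

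Your explicit choice of $f^r(\rvs_1)$ as the optimal surrogate return is essential, not cosmetic. It is the ``there exists a conditioning function'' qualifier in \citet{RCSL}, which the statement here drops, and without it the claim is false. For example, target $f^r(\rvs_1)=0$ in a deterministic MDP where $\beta$ sometimes collects zero reward: this gives $J^r(\pi)=0<J^r(\pi^*)$ while the bound equals $0$.

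Your tightness paragraph names the construction only vaguely, but the paper does no more than cite \citet{RCSL} for that claim.
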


\begin{corollary}
    \label{thm:cost_infi}
    Under assumptions analogous to those in Lemma \ref{thm:infinite} for the cost function, specifically: $P_{\beta} (g^c(\tau) = f^c(\rvs_1) |\rvs_1 ) \geq \alpha_{f^c}$, $P(c \neq \gC(\rvs, \rva) \text{ or } s' \neq \gT(\rvs, \rva) | \rvs,\rva ) \leq \epsilon$, $f^c(\rvs) = f^c(\rvs') + c$.
    Let $J^c(\pi) = \E_{\tau \sim \pi} [g^c(\tau)]$, the following bound holds:
    \begin{align}
        J^c(\pi) - J^c(\pi^*) \leq \epsilon\left( \frac{1}{\alpha_{f^c}} + 3\right)\gH^2.
    \end{align}
\end{corollary}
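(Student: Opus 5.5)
The plan is to derive the Corollary from Lemma~\ref{thm:infinite} by relabelling costs as rewards, not by redoing the argument of \citet{RCSL}. The point is that the Lemma does not care what the scalar signal means. It uses only three properties: coverage of the conditioned total under $\beta$, near determinism of the signal and the transitions, and additive consistency of the conditioning function along transitions.

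\textbf{Construction.} Define a surrogate reward $\tilde r := -c$, with deterministic part $\tilde\gR(\rvs,\rva) := -\gC(\rvs,\rva)$. Define the surrogate conditioning function $\tilde f(\rvs) := -f^c(\rvs)$. Then $g^{\tilde r}(\tau) = \sum_t \tilde r_t = -g^c(\tau)$.

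If the Lemma is used in the binary-reward form of Theorem~\ref{thm:improve}, I would instead use $\tilde r := 1-c$. I would then augment the state with the time index $t$ and set $\tilde f(\rvs_t) := (\gH - t + 1) - f^c(\rvs_t)$, so that $g^{\tilde r} = \gH - g^c$. The checks below go through identically up to this additive constant.

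\textbf{Verifying the Lemma's hypotheses.}
\begin{itemize}
\item Return coverage: the event $\{g^{\tilde r}(\tau) = \tilde f(\rvs_1)\}$ coincides with $\{g^c(\tau) = f^c(\rvs_1)\}$. Hence its $\beta$-probability is at least $\alpha_{f^c}$, so we may take $\alpha_{\tilde f} = \alpha_{f^c}$.
\item Near determinism: $\{\tilde r \neq \tilde\gR(\rvs,\rva)\} = \{c \neq \gC(\rvs,\rva)\}$. The assumed bound $P(c\neq\gC \text{ or } \rvs'\neq\gT \mid \rvs,\rva)\le\epsilon$ therefore transfers verbatim.
\item Consistency: negating $f^c(\rvs) = f^c(\rvs') + c$ gives $\tilde f(\rvs) = \tilde f(\rvs') + \tilde r$. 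In the $1-c$ variant, the time-index term contributes exactly the extra $+1$ per step.
\end{itemize}

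\textbf{Identifying the policies.} Training with Equation~\ref{eq:DTLoss}, conditioned on cost-to-go tokens $\hat c_t$, induces the same conditional action distribution as training conditioned on $\tilde r$-to-go tokens. This is because the token map $\hat c_t \mapsto -\hat c_t$ (or $\mapsto (\gH-t+1)-\hat c_t$) is a known bijection. So $\pi$ is exactly the policy the Lemma refers to in the surrogate problem. Likewise, an optimal policy for the surrogate reward is a cost-minimizing policy, which is what $\pi^*$ denotes here.

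\textbf{Conclusion.} The Lemma now gives $J^{\tilde r}(\pi^*) - J^{\tilde r}(\pi) \le \epsilon(1/\alpha_{f^c}+3)\gH^2$. Substituting $J^{\tilde r} = -J^c$ (or $\gH - J^c$, where the constant cancels) yields $J^c(\pi) - J^c(\pi^*) \le \epsilon(1/\alpha_{f^c}+3)\gH^2$.

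\textbf{Main obstacle.} The delicate step is confirming that the proof of Lemma~\ref{thm:infinite} uses no sign or range property of the reward beyond what the relabelling preserves. For example, a bound like $|g| \le \gH$ is used to convert the $\epsilon$-probability deviation events into an $\epsilon\gH^2$ loss. I would first try $\tilde r = 1-c$, since it keeps rewards binary and in $[0,1]$ and so matches the Lemma's setting exactly. The only price is the time-augmented state, which the Lemma's footnote already sanctions.
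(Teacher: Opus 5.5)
Your proof is correct, but it takes a different route from the paper's.

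\textbf{How the routes differ.} The paper's proof is one sentence: rerun the argument behind Lemma~\ref{thm:infinite} with costs in place of rewards and the conclusion reversed. You instead apply the Lemma as a black box to a relabelled problem. With $\tilde r=-c$ and $\tilde f=-f^c$:
\begin{itemize}
\item the coverage events coincide with the originals, and so do the near-determinism events;
\item consistency survives negation;
\item the cost-to-go token is an invertible function of the surrogate return-to-go token, so the conditioned policy is unchanged;
\item the surrogate optimum is the cost minimizer.
\end{itemize}
Your route buys precision. It names exactly what must be checked, and it makes explicit that $\pi^*$ in the Corollary must be a cost-minimizing policy. The paper leaves this implicit, and Theorem~\ref{thm:improve} even uses one symbol $\pi^*$ for both the reward and the cost bounds. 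The paper's route needs no encoding, but it asks the reader to re-verify the whole return-conditioned proof.

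\textbf{The plain encoding suffices.} You do not need to fall back on $1-c$. The reward scale enters this kind of bound only through the largest gap between two returns, or between a return and the target (which is an achievable return under coverage). That gap is $\gH$ for per-step rewards in any unit-width interval, so $\{-1,0\}$ works as well as $\{0,1\}$. The $1-c$ variant is in fact the more fragile one. The map $\hat c_t\mapsto(\gH-t+1)-\hat c_t$ is invertible only if the policy conditions on $t$. If $t$ is not already recoverable from the state, augmenting with it can turn $P_\beta(\rva\mid\rvs,\hat c)$ into a different policy.

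\textbf{An inherited defect.} Because your reduction is faithful, it also transfers a defect in the paper's Lemma. The source result bounds $\E[f(\rvs_1)]-J(\pi)$, which equals $J(\pi^*)-J(\pi)$ only when $f(\rvs_1)=V^*(\rvs_1)$. Conditioning on a low target with $\epsilon=0$ refutes the $J(\pi^*)$ form. Applied to the correct form, your relabelling gives $J^c(\pi)-\E[f^c(\rvs_1)]\le\epsilon(1/\alpha_{f^c}+3)\gH^2$. This coincides with the Corollary only when $f^c(\rvs_1)$ is the minimal achievable cost. The caveat applies equally to the paper's one-line argument; it is not a flaw in your reduction.
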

The proof follows the same argument as Lemma~\ref{thm:infinite}.

Based on Lemma \ref{thm:infinite} and Corollary \ref{thm:cost_infi}, we now give the proof of Theorem \ref{thm:improve}.

\begin{proof}
    We prove the bounds for reward and cost separately.

    \textbf{Reward Bound.} For the reward, we begin with:
    \begin{align}
        &\E_{\tau \sim \pi^*}[g^r(\tau)] - \E_{\tau \sim \hat{\pi}} [g^r(\tau)] \\
        &=\E_{\tau \sim \pi^*}[g^r(\tau)] - \E_{\tau \sim \pi} [g^r(\tau)] + \E_{\tau \sim \pi} [g^r(\tau)] - \E_{\tau \sim \hat{\pi}} [g^r(\tau)] \\
        &=J^r(\pi^*) - J^r(\pi) + J^r(\pi) - \E_{\tau \sim \hat{\pi}} [g^r(\tau)] \\
        &\leq \epsilon\left( \frac{1}{\alpha_f} + 3\right)\gH^2 + J^r(\pi) - \E_{\tau \sim \hat{\pi}} [g^r(\tau)]. \label{eq:first}
    \end{align}
    Next, for the second term in Equation \ref{eq:first}:
    \begin{align}
        &J^r(\pi) - \E_{\tau \sim \hat{\pi}} [g^r(\tau)] \\
        &= \E_{\tau \sim \pi} [g^r(\tau)] - \E_{\tau \sim \hat{\pi}} [g^r(\tau)] \\
        &= \E_{\tau \sim \pi} [\sum_{t=1}^\gH (r_t)] - \E_{\tau \sim \hat{\pi}} [\sum_{t=1}^\gH (r_t)] \\
        &= \E_{\rvs_1} \sum_{t=1}^\gH (P^r_t \cdot r_t) - \E_{\rvs_1} \sum_{t=1}^\gH (\hat{P}^r_t \cdot r_t), \\
    \end{align}
    where $P^r_t$ and $\hat{P}^r_t$ represent the probabilities of selecting the maximum-reward actions under policies derived from Equation \ref{eq:DTLoss} and Equation \ref{eq:final_update}, respectively.
    Since rewards are binary and by the condition $P\{\hat{P}^r_i - P^r_i  \geq \sigma_r, \forall~i \} \geq 1 - \delta_r$, we have:
    \begin{align}
        &\E_{\rvs_1} \sum_{t=1}^\gH (P^r_t \cdot r_t) - \E_{\rvs_1} \sum_{t=1}^\gH (\hat{P}^r_t \cdot r_t) \\
        &= \E_{\rvs_1} \sum_{t=1}^\gH [(P^r_t - \hat{P}^r_t)r_t] \\
        &\leq \E_{\rvs_1} \sum_{t=1}^\gH (-\sigma_r) \cdot r_t \\
        & \leq -\gH \sigma_r. \label{eq:second}
    \end{align}
    Substituting Equation \ref{eq:second} into Equation \ref{eq:first}, we get:
    \begin{align}
        &\E_{\tau \sim \pi^*}[g^r(\tau)] - \E_{\tau \sim \hat{\pi}} [g^r(\tau)] \\
        &\leq \epsilon\left( \frac{1}{\alpha_f} + 3\right)\gH^2 - \gH \sigma_r. \label{eq:proofres1}
    \end{align}

    \textbf{Cost Bound.} Similarly,
    \begin{align}
    \E_{\tau \sim \hat{\pi}}[g^c(\tau)] - \E_{\tau \sim \pi^*}[g^c(\tau)]
    &=
    \bigl(J^c(\hat{\pi}) - J^c(\pi)\bigr)
    +
    \bigl(J^c(\pi) - J^c(\pi^*)\bigr) \nonumber\\
    &\le
    \bigl(J^c(\hat{\pi}) - J^c(\pi)\bigr)
    +
    \epsilon\left(\frac{1}{\alpha_{f^c}}+3\right)\gH^2. \label{eq:cost_first}
    \end{align}
    Moreover,
    \begin{align}
    J^c(\hat{\pi}) - J^c(\pi)
    &=
    \E_{\rvs_1}\sum_{t=1}^{\gH}(\hat P_t^c - P_t^c)\,c_t \nonumber\\
    &\le -\gH\sigma_c, \label{eq:cost_second}
    \end{align}
    where we used the binary cost and the event $\{\hat P_t^c - P_t^c \ge \sigma_c,\forall t\}$.
    Combining \eqref{eq:cost_first}--\eqref{eq:cost_second} yields
    \[
    \E_{\tau \sim \hat{\pi}}[g^c(\tau)] - \E_{\tau \sim \pi^*}[g^c(\tau)]
    \le
    \epsilon\left(\frac{1}{\alpha_{f^c}}+3\right)\gH^2 - \gH\sigma_c.
    \]
    where $P^c_t$ and $\hat{P}^c_t$ represent the probabilities of selecting minimum-cost actions under policies derived from Equation \ref{eq:DTLoss} and Equation \ref{eq:final_update}, respectively.
\end{proof}

\begin{remark}[Scope of the analysis]
The above results are derived in a stylized setting (binary signals, near-determinism, and consistent conditioning) to isolate the effect of Q-penalized updates relative to the baseline objective in Equation \ref{eq:DTLoss}.
They should be viewed as an explanatory guarantee rather than a full characterization of deep offline RL with function approximation.
\end{remark}

\begin{remark}[Interpretation]
Compared with Lemma~\ref{thm:infinite} (and its cost analogue), Theorem~\ref{thm:improve} introduces additional improvement terms $-\gH\sigma_r$ and $-\gH\sigma_c$, capturing the gain from Q-penalized training in increasing the probability of selecting reward-maximizing and cost-minimizing actions, respectively.
\end{remark}

\begin{corollary}
    If $\alpha_{f^r} > 0, \epsilon = 0$, and $f^r(s_1) = V^{r*}(s_1)$ for all initial states $s_1$, then $J^r(\pi^*) = J^r(\pi) = J^r(\hat{\pi})$ under $\hat{P}_i^r=P_i^r$ and $\sigma_r=0$. Analogously, if $\alpha_{f^c}>0$, $\epsilon=0$, and $f^c(s_1)=V^{c*}(s_1)$ for all $s_1$, then $J^c(\pi^*)=J^c(\pi)=J^c(\hat{\pi})$.
\end{corollary}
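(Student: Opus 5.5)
The plan is a sandwich argument. The upper bounds come from Lemma~\ref{thm:infinite}, Corollary~\ref{thm:cost_infi} and the step-wise decomposition in the proof of Theorem~\ref{thm:improve}, all specialized to $\epsilon=0$ and $\sigma_r=\sigma_c=0$. The matching lower bounds come from the optimality of $\pi^*$. Throughout, $\pi^*$ is read as the optimal policy of the stylized setting. For the reward part this means $J^r(\pi^*)\ge J^r(\pi')$ for every policy $\pi'$. For the cost part it means the cost-minimizing policy, so $J^c(\pi^*)\le J^c(\pi')$, and $V^{c*}$ is the corresponding minimal cost-to-go.

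\textbf{Reward, behaviour-cloned policy.} With $\epsilon=0$ the dynamics and rewards are deterministic. Since $\alpha_{f^r}>0$, the lemma applies and gives $J^r(\pi^*)-J^r(\pi)\le 0\cdot(1/\alpha_{f^r}+3)\gH^2=0$. Optimality of $\pi^*$ gives the reverse inequality, so $J^r(\pi^*)=J^r(\pi)$.

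The hypothesis $f^r(\rvs_1)=V^{r*}(\rvs_1)$ is what makes this non-vacuous. Coverage guarantees that trajectories attaining $V^{r*}(\rvs_1)$ actually occur in the data. Consistency of $f^r$ then keeps the conditioning target on an optimal trajectory at every step, so the conditioned policy reproduces the optimal return.

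\textbf{Reward, Q-penalized policy.} I will reuse the decomposition from the proof of Theorem~\ref{thm:improve}:
\[
J^r(\pi)-J^r(\hat{\pi})=\E_{\rvs_1}\sum_{t=1}^{\gH}(P^r_t-\hat{P}^r_t)\,r_t .
\]
Under $\hat{P}^r_t=P^r_t$ every summand vanishes, so $J^r(\hat{\pi})=J^r(\pi)=J^r(\pi^*)$. Equivalently, the event $\{\hat{P}^r_i-P^r_i\ge 0,\ \forall i\}$ holds surely, so Theorem~\ref{thm:improve} applies with $\delta_r=0$. Its reward bound then reads $J^r(\pi^*)-J^r(\hat{\pi})\le 0$, and optimality again closes the sandwich.

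\textbf{Cost.} The cost part is identical with inequalities reversed. Corollary~\ref{thm:cost_infi} with $\epsilon=0$ gives $J^c(\pi)\le J^c(\pi^*)$, and minimality of $\pi^*$ gives $J^c(\pi)\ge J^c(\pi^*)$. The cost decomposition $\E_{\rvs_1}\sum_t(\hat{P}^c_t-P^c_t)c_t=0$ then transfers the equality to $\hat{\pi}$.

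\textbf{Expected obstacle.} The only delicate step is interpretive rather than computational. The ``$\ge$'' direction requires $\pi^*$ to be optimal over a class containing both $\pi$ and $\hat{\pi}$. It also requires conditioning on $V^{r*}$ (resp.\ $V^{c*}$) to be compatible with the coverage assumption, so that the lemma's $\pi$ is well defined on the relevant states. I would state these conventions explicitly before running the sandwich. Once they are fixed, the remaining steps are immediate substitutions.
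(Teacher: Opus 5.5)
Your proposal is correct and follows the paper's own (implicit) argument. The paper gives no separate proof: the corollary is treated as the specialization of Lemma~\ref{thm:infinite}, Corollary~\ref{thm:cost_infi} and Theorem~\ref{thm:improve} to $\epsilon=0$ and $\sigma_r=\sigma_c=0$, with the reverse inequality coming from optimality of $\pi^*$ and the equality with $J(\hat{\pi})$ read off from the step-wise decomposition under $\hat{P}_i=P_i$. Your explicit convention on what $\pi^*$ means for the cost side, and your note that $f^r(\rvs_1)=V^{r*}(\rvs_1)$ is what really drives the original RCSL bound (the paper's statement of the lemma silently absorbs it), are sensible clarifications of points the paper leaves implicit.
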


\begin{remark}[Reward--cost trade-off]
The reward-side and cost-side conditions are stated and analyzed separately.
In general constrained control problems, optimizing reward and minimizing cost may not be simultaneously achievable under a single conditioning specification.
QPT addresses this trade-off operationally by combining reward/cost terms in Equation \ref{eq:final_update} and enforcing a user-specified cost threshold at inference.
\end{remark}

\section{Algorithm Details}
\subsection{Algorithm Pseudocode}

The detailed pipeline of QPT is summarized in Algorithm  \ref{alg:QPT}.

\begin{algorithm*}[tb]
   \caption{QPT: Q-learning Penalized Transformer}
   \label{alg:QPT}
\begin{algorithmic}
   \STATE {\bfseries Input:} Sequence horizon $K$, offline datasets $\gD$, coefficient $\rho$, a set of candidate pairs of return-to-go and cost-to-go  $\{(\hat{r}_0^0, \hat{c}_0^0), (\hat{r}_0^1, \hat{c}_0^1), \dots, (\hat{r}_0^m, \hat{c}_0^m) \}$.
   \STATE Initialize policy network $\pi_{\theta}$, reward Q-networks $Q^r_{\phi_1}, Q^r_{\phi_2}$, cost Q-networks $Q^c_{\psi_1}, Q^c_{\psi_2}$, and target networks $\pi_{\theta'}, Q^r_{\phi_1'}, Q^r_{\phi_2'}, Q^c_{\psi_1'}$ and $Q^c_{\psi_2'}$.
   \STATE Update the dataset $\gD$ with data augmentation technique based on Equation \ref{eq:dataaug}.
   \STATE {\color{gray}\te{// Train the QPT}}
   \FOR{$t=1$ {\bfseries to} $\gH$}
   \STATE Sample sequence transition mini-batch $\gB = \{ (\hat{r}_j, \hat{c}_j, \rvs_j, \rva_j, r_j, c_j)_{j=t}^{t+K}, \} \sim \gD$.
   \STATE {\color{gray}\te{// Reward Q-network and cost Q-network learning}}
   \STATE Sample $\hat{\rva}_{t+K} \sim \pi_{\theta'} (\hat{\rva}_{t+K} | \hat{r}_{t : t+K}, \hat{c}_{t : t+K}, \rvs_{t : t+K}, \rva_{t : t+K-1})$.
   \STATE Update $Q^r_{\phi_1}$ and $Q^r_{\phi_2}$ by Equation \ref{eq:Q_update}, update $Q^c_{\psi_1}$ and $Q^c_{\psi_2}$ by Equation \ref{eq:Qc_update}.
   \STATE {\color{gray}\te{// Policy learning}}
   \FOR{$i=1$ {\bfseries to} $K$}
   \STATE Sample $\hat{\rva}_{t+i} \sim \pi_{\theta} (\hat{\rva}_{t+i} | \hat{r}_{t:t+i}, \hat{c}_{t:t+i}, \rvs_{t:t+i}, \rva_{t:t+i-1})$ in an auto-regressive way.
   \ENDFOR.
   \STATE Update policy by minimizing Equation 
   \ref{eq:final_update}.
   \STATE $\theta' = \rho \theta' + (1-\rho)\theta, \phi'_i = \rho \phi_i' + (1-\rho) \phi_i$, $ \psi'_i = \rho \psi_i' + (1-\rho) \psi_i$ for $i = \{1 ,2\}$.
   \ENDFOR.
   \STATE {\color{gray}\te{// Inference with QPT}}
   \STATE Given multiple pairs of target return-to-go and target cost-to-go choice $(\hat{r}^j, \hat{c}^j)^{j=1:m}_0$ and initial state $s_0$.
   \REPEAT
   \STATE Sample multiple actions with different return-to-go $\hat{\rva}^j_t = \pi_{\theta} (\hat{\rva}^j_t | \hat{r}^j_{t-K+1:t}, \hat{c}^j_{t-K+1:t}, \rvs_{t-K+1:t}, \rva_{t-K+1:t-1})$ for $j={1, \dots, m}$.
   \STATE Compute Q networks with candidate state-action pair $(\rvs_t, \hat{\rva}_t^j)$ for $j={1, \dots, m}$.
   \STATE Sample the action $\rva_t$ from action set $\{ \hat{\rva}_t^j \}_{j=1}^m$ with Equation \ref{eq:inf} and Equation \ref{eq:inf_c}.
   \STATE Execute the action $\rva_t$ and collect the reward $r_t$, cost $c_t$ and next state $\rvs_{t+1}$.
   \STATE Update current return-to-go $\hat{r}^j_{t+1} = \hat{r}^j_{t} - r_t$, cost-to-go $\hat{c}^j_{t+1} = \hat{c}^j_{t} - c_t$ for $j={1, \dots, m}$.
   \UNTIL{$Done$ is $true$.}
\end{algorithmic}
\end{algorithm*}

\subsection{Data Augmentation}
\label{sec:app_DA}
The intuition is to relabel the associated Pareto trajectory's reward and cost returns, such that the agent can learn to imitate the behavior of the most rewarding and safe trajectory $\tau^*$ when the desired return $(\rho, \kappa)$ is infeasible, i.e., $\rho > \text{RF}(\kappa, \gD)$. 
The Reward Frontier (RF) value is defined by the maximum reward with cost $\kappa \in \mathbb{C}$, where $\mathbb{C} := \{ C(\tau) : \tau \in \gD \}$ is the set of all the possible episodic cost in $\gD$:
\begin{align}
    \text{RF}(\kappa, \gD) = \max_{\tau \in \gD} R(\tau), ~~s.t.~~C(\tau) = \kappa.
\end{align}

The augmentation procedure is detailed in Algorithm~\ref{alg:DA} \citep{liu2023constrained}.
Figure~\ref{fig:dataaug} provides an illustrative example: arrows map Pareto-optimal trajectories to their corresponding augmented return-cost pairs.

\begin{figure}
    \centering
    \includegraphics[width=1.0\linewidth]{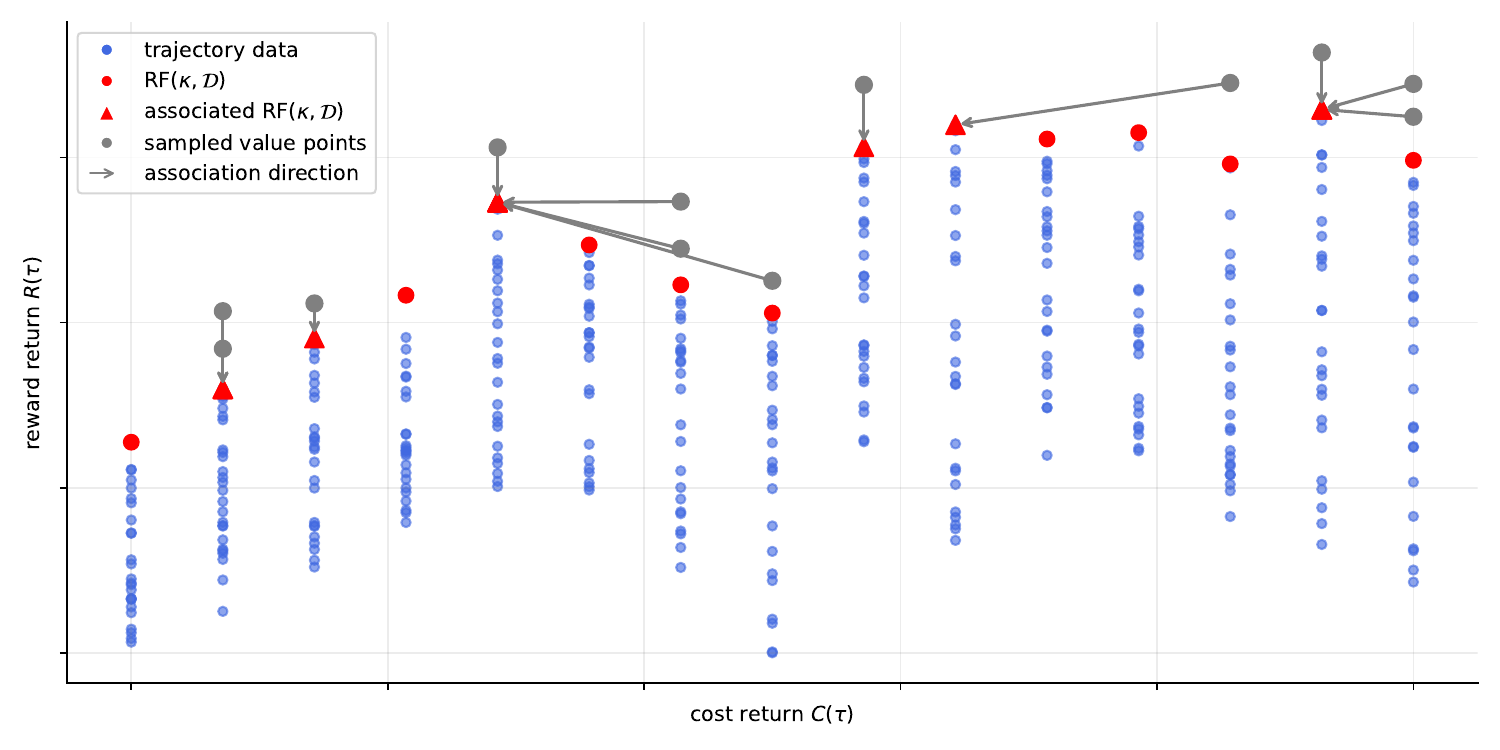}
    \vspace{-.8cm}
    \caption{Illustrative example of the data-augmentation procedure \citep{liu2023constrained}.}
    \label{fig:dataaug}
\end{figure}

\begin{algorithm}[h]
\caption{Data Augmentation via Relabeling}
{\bfseries Input:} \raggedright dataset $\gD$, samples $N$, reward sample max $r_{max}$ \par
{\bfseries Output:} \raggedright augmented trajectory dataset $\gD$ \par
\begin{algorithmic}[1] 
\STATE $c_{min} \leftarrow \min_{\tau\sim\gD} C(\tau)$, $c_{max} \leftarrow \max_{\tau\sim\gD} C(\tau)$
\FOR{$i=1,..., N$}
\STATE $\triangleright$ \textit{sample a cost return}
\STATE $\kappa_i \sim \text{Uniform}(c_{min}, c_{max})$
\STATE $\triangleright$ \textit{sample a reward return above the RF value}
\STATE $\rho_i \sim \text{Uniform}(\text{RF}(\kappa_i, \gD), r_{max})$
\STATE $\triangleright$ \textit{find the closest and safe Pareto trajectory}
\STATE $\tau^*_i \leftarrow \arg\max_{\tau\sim\gD} R(\tau), s.t. \quad C(\tau) \leq \kappa_i$
\STATE $\triangleright$ \textit{relabel the reward and cost return}
\STATE $\hat{\tau}_i \leftarrow \{\hat{r}^*_i + \rho_i - R(\tau^*_i),\hat{c}^*_i+ \kappa_i - C(\tau^*_i),\mathbf{s}^*_i,\mathbf{a}^*_i\}$
\STATE $\triangleright$ \textit{append the trajectory to the dataset}
\STATE $\gD \leftarrow \gD \cup \{\hat{\tau}_i\}$
\ENDFOR
\end{algorithmic} \label{alg:DA}
\end{algorithm}

\subsection{Ensemble}
\label{sec:app_ens}

In this section, we highlight the detailed ensemble process. 
During the training phase, RTG and CTG values are derived directly from the trajectory data within the dataset. Specifically, these values are computed as the cumulative discounted rewards and costs from each state to the terminal state along the observed trajectories, thereby preserving the ground-truth signal from the environment.
For inference, we utilize the default RTG and CTG pairs established in the DSRL benchmark as our baseline. To generate candidate pairs, we perturb the RTG values by introducing random noises while maintaining constant CTG values across all candidates. This asymmetric perturbation strategy is theoretically motivated: our objective is to maximize expected returns while adhering to a fixed cost constraint. By holding CTG constant while exploring a diverse range of RTG values, we effectively search the action-value landscape for optimal policies that maximize reward within the predetermined cost threshold.

The ensemble inference mechanism represents a computationally efficient approach to action selection that leverages our learned Q-networks to identify optimal actions from multiple candidates. This process operates exclusively during the inference phase and involves the following structured procedure:
First, we generate multiple return-to-go and cost-to-go conditioning pairs by introducing controlled stochastic perturbations to a default reference pair. This creates a diverse set of conditioning signals that explore different regions of the reward-safety trade-off space. Rather than processing these pairs sequentially, we exploit the parallelization capabilities of modern GPU architectures by batching all candidate pairs into a single forward pass through our trained Transformer model. This parallelization technique ensures that the computational overhead remains minimal compared to evaluating a single conditioning pair.
Once the model generates actions corresponding to each conditioning pair, we employ our learned Q-networks ($Q^r$ and $Q^c$) as evaluation metrics to select the optimal action according to specified criteria. Depending on the deployment context, these criteria may prioritize reward maximization subject to hard safety constraints, or implement a parameterized trade-off between reward and safety considerations.
By dynamically evaluating multiple conditioning pairs during inference, our method effectively automates this hyperparameter selection process, reducing the need for exhaustive offline tuning while potentially discovering superior action candidates that might be overlooked in a single-sample approach.

\section{Experiment Details}

\subsection{Environment descriptions}
The environments designed for evaluating safe offline RL methods are based on different simulators, each tailored to specific tasks and agent types. Figure \ref{fig:task-vis} visualize some representative tasks of these environments.

\begin{figure}[!h]
    \centering
    \includegraphics[width=0.9\linewidth]{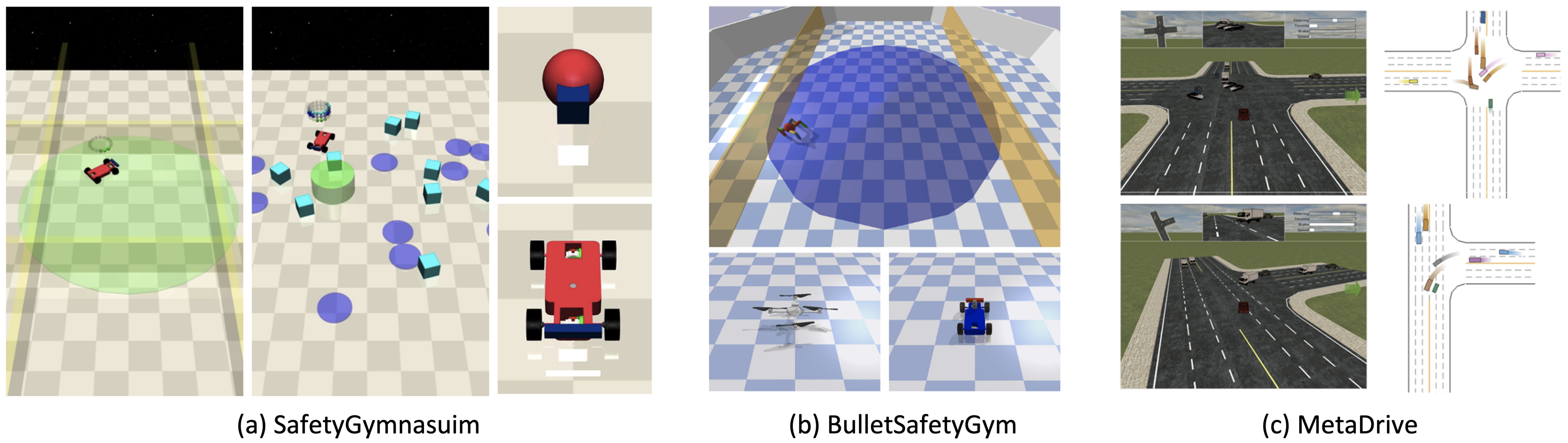}
    \caption{Visualization of the simulation environments and representative tasks \citep{liu2023datasets}.}
    \label{fig:task-vis}
\end{figure}

\textbf{Safety-Gymnasium} \citep{ray2019benchmarking, ji2024omnisafe}:
Built on the Mujoco physics simulator, Safety-Gymnasium provides safety-critical environments with diverse tasks. The Car agent engages in tasks such as Button, Push, and Goal, each available in two difficulty levels. These tasks require navigating hazards while completing objectives. For example, in Goal, the agent moves toward randomly reset goal positions upon completion. In Push, it moves a box to dynamic goal locations, while Button involves pressing scattered goal buttons. Additional velocity-constrained tasks are included for agents such as Ant, HalfCheetah, and Swimmer. The Velocity task challenges agents to coordinate leg movements to move forward, while Run requires navigating from a random direction and speed to a designated endpoint. The Circle task rewards agents for following a circular path while avoiding hazardous zones. Tasks are named by combining agent, task, and difficulty level (e.g., CarPush1), reflecting complexity and objectives. 

\textbf{Bullet-Safety-Gym} \citep{gronauer2022bullet}: Developed with the PyBullet physics simulator, this suite includes four agent types--Ball, Car, Drone, and Ant--and two primary tasks: Circle and Run. In Run, agents navigate corridors bounded by safety lines, incurring penalties for crossing them or exceeding speed limits. In Circle, agents move clockwise along a circular path, earning rewards for higher speeds near the boundary and penalties for straying outside the safety zone. These environments focus on safety evaluation with shorter, more straightforward tasks compared to Safety-Gymnasium. 

\textbf{MetaDrive} \citep{li2022metadrive}: MetaDrive is a self-driving simulation environment based on the Panda3D game engine, offering realistic driving conditions with varying road complexity (easy, medium, hard) and traffic density (sparse, medium, dense). Tasks are named by their road and vehicle conditions. This environment enables testing offline RL algorithms in scenarios that closely mimic real-world driving challenges.

An overview of these environments and tasks is presented in Table \ref{tab:environment-overview}. Each environment presents unique challenges for safe offline RL evaluation, from self-driving simulations to hazard-avoidance tasks, offering varied complexities and objectives for testing algorithm robustness.

\begin{table}[ht]
\centering
\caption{ Overview of the safe RL benchmarks and tasks for dataset collection \citep{liu2023datasets}.} 
\label{tab:environment-overview}
\renewcommand{\arraystretch}{1.1}
\resizebox{1.\linewidth}{!}{
\begin{tabular}{|c|c|c|c|c|c|c|}
\hline
Benchmarks  & Backends & Environments  & Agents  & \begin{tabular}[c]{@{}c@{}}Difficulty\\ Levels\end{tabular} & \begin{tabular}[c]{@{}c@{}}Total\\ Tasks\end{tabular} & \begin{tabular}[c]{@{}c@{}}Dataset\\ Trajectories\end{tabular} \\ \hline
\multirow{2}{*}{SafetyGymnasium} & \multirow{2}{*}{Mujoco} & \begin{tabular}[c]{@{}c@{}}Goal, Button,\\ Push, Circle\end{tabular} & Point, Car            & 2   & 16    & 40310 \\ \cline{3-7} 
 & & Velocity  & \begin{tabular}[c]{@{}c@{}}Ant, HalfCheetah, Hopper,\\ Swimmer, Walker2d\end{tabular} & 1   & 5  & 11399 \\ \hline
BulletSafetyGym & PyBullet  & Run, Circle  & Ball, Car, Drone, Ant & 1  & 8  & 14498  \\ \hline
MetaDrive  & Panda3D & Driving   & Vehicle  & 3  & 9  & 9000  \\ \hline
\end{tabular}
}
\end{table}

\subsection{Hyperparameters}

The reward and cost Q-networks used across all tasks consist of four linear layers, each employing Mish activation functions for non-linearity. To ensure a fair comparison between QPT and the baseline methods, we use a consistent setup of $10^5$ gradient steps (except for the \textit{MetaDrive} tasks) and a rollout length equal to the maximum episode length for all experiments. 
A comprehensive list of hyperparameters utilized in the experiments is provided in Table \ref{tab:exp-parameters}.

\begin{table}[h]
\centering
\renewcommand{\arraystretch}{1.1}
\caption{Hyperparameters for QPT}
\label{tab:exp-parameters}
\begin{tabular}{cccc}
\toprule
Parameter   & All tasks  & Parameter   & All tasks  \\ \midrule
Number of layers          & 3 & Number of attention heads & 8  \\
Embedding dimension       & 128 & Batch size  & 2048 \\
Context length $K$        & 10 & Learning rate  & 0.0001  \\
Droupout                  & 0.1 & Adam betas  & (0.9, 0.999) \\
Grad norm clip            & 0.25 & Cost threshold  & 10 \\ 
Training steps (BulletGym, SafetyGym) & 100000 & Training steps (MetaDrive) & 200000 \\
\bottomrule
\end{tabular}
\end{table}

\subsection{Ablation of the number of candidate actions in Ensemble}
\label{sec:abnumber}

In our implementation, we utilize a default ensemble size of 50 candidate actions during inference. 
We also conduct a systematic ablation study specifically examining the impact of ensemble size on performance using the MetaDrive harddense environment as our testbed. 
The results in Table \ref{tab:exp-number} reveal a nuanced relationship between ensemble size and overall performance. As the number of sampled candidates increases from small values, we observe consistent performance improvements, indicating that larger ensemble sizes enable more comprehensive exploration of the action space and higher-quality policy selection. However, this relationship exhibits clear non-monotonicity, with performance plateauing and eventually declining beyond a certain threshold. 
A larger number of candidate target pairs provides a broader search space, potentially improving performance. However, this also incurs increased computational costs and greater susceptibility to noisy or suboptimal pairs, stemming from the biased estimation of the learned Q-networks.

\begin{table}[h]
\centering
\renewcommand{\arraystretch}{1.1}
\caption{Impact of the number of candidate target reward and cost pairs in the \textit{harddense} task in the \textit{MetaDrive} setting.}
\label{tab:exp-number}
\begin{tabular}{cccccc}
\toprule
   & 1  & 10   & 30 & 50 & 100  \\ \midrule
Reward & $0.49 \pm 0.04$ & $0.50 \pm 0.02$ & $0.52 \pm 0.03$ & $0.50 \pm 0.02$ & $0.48 \pm 0.04$ \\
Cost & $0.84 \pm 0.02$ & $0.83 \pm 0.03$ & $0.90 \pm 0.05$ & $0.81 \pm 0.03$ & $0.80 \pm 0.03$ \\
\bottomrule
\end{tabular}
\end{table}

\end{document}